\theoremstyle{plain}
\newtheorem{theorem}{Theorem}[section]
\newtheorem{proposition}[theorem]{Proposition}
\newtheorem{lemma}[theorem]{Lemma}
\theoremstyle{definition}
\theoremstyle{remark}
\newtheorem{remark}[theorem]{Remark}
\newcommand{\MM}[0]{\texttt{MatMul} }
\newcommand{\CE}[0]{\texttt{CyclicHMM-DET} }
\newcommand{\CH}[0]{\texttt{CyclicHMM-HARD} }
\newcommand{\HMM}[0]{\texttt{HMM} }
\newcommand{\CR}[0]{\texttt{CyclicHMM-RND} }
\newcommand{\LDS}[0]{\texttt{LDS} }
\title{On Limitation of Transformer for Learning HMMs}
\author{%
  \name Jiachen Hu \email nickh@pku.edu.cn \\
  \addr School of Computer Science, Peking University
  \AND
  \name Qinghua Liu \email qinghual@princeton.edu \\
  \addr Department of Electrical and Computer Engineering, Princeton University 
   \AND
   \name Chi Jin \email chij@princeton.edu\\
  \addr Department of Electrical and Computer Engineering, Princeton University \\
}
\begin{document}

\maketitle

\vspace{-6ex}

\begin{abstract}
Despite the remarkable success of Transformer-based architectures in various sequential modeling tasks, such as natural language processing, computer vision, and robotics, their ability to learn basic sequential models, like Hidden Markov Models (HMMs), is still unclear. This paper investigates the performance of Transformers in learning HMMs and their variants through extensive experimentation and compares them to Recurrent Neural Networks (RNNs). We show that Transformers consistently underperform RNNs in both training speed and testing accuracy across all tested HMM models. There are even challenging HMM instances where Transformers struggle to learn, while RNNs can successfully do so. Our experiments further reveal the relation between the depth of Transformers and the longest sequence length it can effectively learn, based on the types and the complexity of HMMs. To address the limitation of transformers in modeling HMMs, we demonstrate that a variant of the Chain-of-Thought (CoT), called \emph{block CoT} in the training phase, can help transformers to reduce the evaluation error and to learn longer sequences at a cost of increasing the training time. Finally, we complement our empirical findings by theoretical results proving the expressiveness of transformers in approximating HMMs with logarithmic depth.


\end{abstract}

\section{Introduction}
\label{sec:intro}


Transformer-based architectures \citep{vaswani2017attention} have demonstrated exceptional capabilities in tackling sequential modeling tasks across diverse domains, including natural language processing \citep{brown2020language}, computer vision \citep{dosovitskiy2020image}, robotics \citep{brohan2023rt}, reinforcement learning \citep{janner2021offline,lee2022multi}, etc. Despite their widespread success, the effectiveness of Transformers in learning basic sequential models, such as the Hidden Markov Model (HMM), remains unclear. Investigating this question is crucial for understanding the strengths and limitations of Transformers, especially considering that HMMs are arguably among the simplest yet fundamental tools for modeling natural language \citep{merialdo1994tagging,vogel1996hmm,chiu2020scaling} and time series from applications ranging from control systems \citep{franklin2002feedback} to robotics \citep{doucet2009tutorial}. 

Furthermore, HMMs bear close relation to the widely adopted Partially Observable Markov Decision Process (POMDP) framework in reinforcement learning \citep[e.g.][]{hausknecht2015deep,rashid2020monotonic}, as an HMM can be regarded as a simplification of POMDP, which has no action-input control.
To this end, this paper investigates the following fundamental questions through extensive empirical experiments and theoretical analysis:
\begin{enumerate}
    \item Can Transformer effectively learn HMM models and their variants? 
    \item How does its performance compare to that of Recurrent Neural Network (RNN) in terms of training speed, hyperparameter tuning difficulty, and final accuracy?
     \item Furthermore, when presented with an HMM sequence of a specific length, how many attention layers are required to achieve a desired level of accuracy?
\end{enumerate}

We are particularly interested in the last question due to the pivotal advantage of Transformers over RNNs in long sequence modeling: the depth of the computation graph in Transformers scales linearly with the number of layers and remains  (almost) independent of the sequence length, whereas that of RNNs scales linearly with both. It is vital to verify whether such advantage indeed exists in long sequence modeling tasks such as HMMs.

In this paper, we primarily focus on two fundamental tasks associated with HMMs: autoregression and belief inference \citep{rabiner1986introduction}. Autoregression involves predicting the next observation based on all preceding ones, while belief inference aims to deduce the distribution of the hidden states from previous observations. In our experiments, we evaluate the performance of Transformer and RNN in addressing these tasks across three distinct types of HMMs: discrete HMM with fast mixing speed, discrete HMM with slow mixing speed, and Linear Dynamical System (LDS)---HMM with continuous states and observations. Below, we present an overview of our findings concerning the three aforementioned questions.


\begin{enumerate}
    \item 
Transformers effectively learn to perform belief inference across all tested HMMs when the training dataset includes true beliefs at each time step. However, in the task of next-observation prediction, certain challenging HMM instances exist where Transformers struggle to achieve low prediction loss.

    \item In comparison, RNNs demonstrate the capability to successfully tackle all tasks across the tested HMMs at a faster training speed, yielding lower testing error. Notably, RNN training exhibits greater robustness compared to Transformers, particularly in the realms of hyperparameter tuning and curriculum scheduling.

    \item


    Our experiments reveal distinct patterns in the relationship between sequence length and the minimal depth required for Transformers to learn effectively. These patterns can be categorized into three groups:
    \begin{itemize}
        \item Constant depth: For simple sequential models such as random HMM and LDS, a constant depth, independent of sequence length, is sufficient for Transformers to learn accurately.
        \item Logarithmic scaling: For more complex sequential models such as structured HMMs, we observe an approximate logarithmic dependency between the minimal depth required and the sequence length. This relationship holds for various structured HMM instances, as corroborated by both theory and experiments.
        \item Hard instances: There exists challenging HMM instances where Transformers struggle to learn, even for constant sequence length. These instances require further investigation to identify the underlying reasons for the learning difficulties.
    \end{itemize}

\end{enumerate}

In order to address the limitations of Transformers in learning HMMs, we employ a variant of the CoT prompting in the training phase called block CoT. Block CoT feeds the output of the Transformer back to itself as input every $b$ tokens, which reduces to the standard CoT when $b=1$. The exact value of $b$ can be determined from the scaling between the sequence length and the minimal depth of Transformers required to learn the sequence. This method integrates a recursive inductive bias into the vanilla Transformer architecture. Our findings show that block CoT significantly decreases evaluation error and enhances the sequence length that shallow Transformers can handle. We remark that this approach will nevertheless increase the computational demands, and thereby slows down the training process.


Finally, we also complement our empirical findings by theoretical results, which proves the scaling between the sequence length and minimal depth from the perspective of expressiveness power. Specifically, it is proved that an $L$-layer finite precision Transformer is able to fit any HMMs of at least $2^L$ sequence length.

\subsection{Related work}
\label{sec:related_work}

Our work can be viewed as part of a broader effort to assess the ability of Transformer models on simple, basic and well-defined tasks with synthetic data. Such an approach is advantageous because it allows us to precisely evaluate the Transformer's capabilities in a particular aspect, as we have access to the ground truth model that generates the training data. Below, we highlight some related works along this direction.

Recently, a line of works \citep[e.g.,][]{garg2022can,bai2023transformers,bhattamishra2023understanding,von2023transformers} have studied training Transformer models for in-context learning of regression tasks (e.g., linear regressions) utilizing synthetic datasets. A notable distinction between in-context learning and learning HMMs lies in the data sequence's nature. In in-context learning, all tokens within a data sequence are independently sampled from the same data distribution. Conversely, in HMMs, tokens are recursively generated, with each token strongly influenced by the preceding ones, establishing a mutual dependency among them.

Previous studies have also explored the capability of Transformer models to learn elementary algorithms, such as formal language transduction \citep{deletang2022neural}, arithmetic calculation \citep{dziri2023faith}, recognizing formal languages \citep{bhattamishra2020ability}, sorting \citep{zhou2023algorithms} and learning semi-automata \citep{liu2022transformers}. These problems, as noted in \citep{liu2022transformers}, can all be considered special cases of learning finite-state deterministic automata, which is in turn special cases of of HMMs as noted in Appendix \ref{appendix:equivalence_auto_hmm}.
In comparison, our work focuses on training Transformer models to learn stochastic HMMs, which is a special case of more general stochastic POMDP but without input control. By focusing on stochastic HMMs, our work aims to contribute to the understanding of how Transformer models can learn and generalize from sequential data in the presence of intrinsic uncertainty.

\section{Preliminaries}
\label{sec:background}

In this section, we briefly introduce the basics of HMM models and neural network models considered in this paper.

\subsection{Sequential Models}
\label{sec:seq_model}


An HMM can be formulated by a tuple $(\caS, \caO, \dbP, \dbO, S_0)$, where $\caS$ is the state space, $\caO$ is the observation space, $\dbP(s' \mid s)$ is the transition probability of transitioning to state $s'$ from state $s$, $\dbO(o \mid s)$ is the probability of emitting observation $o$ from state $s$, and $S_0$ is the initial state. By interacting with the HMM for $T$ steps, one can obtain a trajectory (i.e., a sequence of states and observations) $(s_0 = S_0, o_0, ..., s_T, o_T)$, where $o_t$ is sampled from distribution $\dbO(\cdot \mid s_t)$ and unobserved. $s_{t+1}$ is sampled from $\dbP(\cdot \mid s_t)$. We are particularly interested in two basic tasks for learning an HMM in this paper, which are also known as two of the three key problems of HMMs \citep{rabiner1986introduction}:

\begin{itemize}
    \item \emph{Belief state inference:} Assuming the size of the state space is $n$ (i.e., $|\caS| = n$, the states are numbered from 1 to $n$) for a given HMM, belief state inference aims at computing the belief state $b_t \in \dbR^n$ at step $t$ given an observation sequence $(o_1, o_2, ..., o_t)$. The belief state $b_t$ is defined as the posterior probability of the HMM being at each state given the observation sequence $(o_1, o_2, ..., o_t)$: 
    $\bm{b}_t(s) \defeq \Prob(s_t = s \mid o_1, o_2, ..., o_t).$
    It is easy to derive the following equation using Bayes' rules
    \begin{align}
    \label{eqn:belief_state_update}
    \bm{b}_{t+1} = \frac{\mathrm{diag}(\dbO(o_{t+1} \mid \cdot)) \dbP \bm{b}_t}{\left\|\mathrm{diag}(\dbO(o_{t+1} \mid \cdot)) \dbP \bm{b}_t\right\|_1},
    \end{align}
    where $\dbO(o \mid \cdot) = (\dbO(o \mid 1), ..., \dbO(o \mid n))^\top \in \dbR^n$, $\mathrm{diag}(v)$ is the diagonal matrix generated by vector $v$, and $\dbP \in \dbR^{n \times n}$ is the transition matrix with $\dbP(s', s) = \dbP(s' \mid s)$ with a little abuse of notations. 
    \item \emph{Next-Observation prediction:} 
    Another fundamental task is to predict the distributions of the next observation given the history of observations. It is in general simpler than belief state inference in that 
    \begin{align}
    \label{eqn:obs_belief_relation}
    \Prob(o_{t+1} \mid o_1,...,o_t) = \dbO \dbP \bm{b}_t.
    \end{align}
    However, it is more realistic in the sense that the observations are convenient to access for an HMM with unknown transition and emission probability, while the belief states are not. 
\end{itemize}

Throughout the paper, we use $n$ to denote the size of the state space $\caS$ if it is finite, or the dimension of $\caS$ if it is a Euclidean space. The size of the observation space is always finite, which we denote by $m$.


\subsection{Neural Network Models}
\label{sec:nn_model}

Two fundamental sequence-to-sequence models are considered in this paper: the recurrent neural network (RNN) and Transformer.

\paragraph{RNN.} Given a length-$T$ sequence $(x_1, ..., x_T)$ as input, an RNN with embedding dimension $d$ and initial hidden state $h_0 \in \dbR^d$ processes the input sequence as follows: 
\begin{align}
\label{eqn:rnn_recursion}
h_t=\mathrm{ReLU} \left( W_1 x_t + W_2 h_{t-1}  +b\right),
\end{align}
where $W_1, W_2, b$ are the parameters of the RNN.

The final output sequence is obtained by applying a linear decoder layer 
on sequence $(h_1, ..., h_T)$ at each position.

\paragraph{Transformer.} The Transformer \citep{vaswani2017attention} is also a well-known sequence-to-sequence model with significant successes on various prediction tasks. A Transformer with depth $L$ (i.e., $L$ layers) processes the data as follows:

Let $\bm{X}^{(0)} \in \dbR^{T \times d}$ be the output of a position-wise embedding layer ($d$ is the embedding dimension of the Transformer) given $m_0$-dimensional length-$T$ input sequence $(x_1, x_2, ..., x_T)^\top \in \dbR^{T \times m_0}$, the Transformer apply $L$ attention blocks sequentially on $\bm{X}^{(0)}$. The $l$-th attention block transforms the input by 
\begin{align}
\label{eqn:tf_recursion}
\boldsymbol{Y}^{(l-1)}=\boldsymbol{X}^{(l-1)}+\operatorname{Attn}^{(l)}\left(\boldsymbol{X}^{(l-1)}\right), \bm{X}^{(l)}=\boldsymbol{Y}^{(l-1)}+\mathrm{FFN}^{(l)}\left(\boldsymbol{Y}^{(l-1)}\right), \quad l \in[L],
\end{align}
where $\operatorname{Attn}$ is a multi-head self-attention layer and $\operatorname{FFN}$ is a two-layer feed-forward network with $\operatorname{GeLU}$ \citep{hendrycks2016gaussian} as activation functions (defined by Eqn. (\ref{eqn:tf_attn}) and Eqn. (\ref{eqn:tf_ffn}) in Appendix). The final output is obtained by forwarding $\bm{X}^{(L)}$ to a linear readout layer. 

\section{Model}
\label{sec:model}


In this section, we introduce the HMMs and their variant models explored in this paper, broadly classified into two categories: fast-mixing models and slow-mixing structured models. The mixing speed characterizes 
the ``effective length'' of past histories that influence the current belief state. For instance, in a fast-mixing model, the belief state at the current step is essentially influenced only by a few of the most recent observations, making them more amenable to fitting by neural networks. When we mention ``HMM'' in the rest of the paper, it means standard HMMs and their variants. The motivation for studying these HMMs is discussed in Appendix \ref{appendix:motivation}. 


\subsection{Fast-Mixing HMMs}

\paragraph{\texttt{HMM}: Random HMM.}
\label{sec:env_HMM}

The initial set of sequential models under consideration comprises random HMM instances with randomly initialized transition and emission probabilities. Our primary focus is on the belief state inference problem within these random HMMs. This choice is motivated by the observation that, as per (\ref{eqn:obs_belief_relation}), next-observation prediction is a relatively simpler task compared to belief state inference when dealing with random HMMs.



\paragraph{\texttt{LDS}: Linear Dynamical System.}
\label{sec:env_lds}

A linear dynamical system is given by the following equation
\begin{align}
\label{eqn:lds_def}
x_{t+1} = Ax_t + \zeta_t, y_t = Bx_t + \xi_t,
\end{align}
where $x_t \in \dbR^n$ is the (hidden) state at step $t$, $y_t \in \dbR^m$ is the observation at step $t$, and $\zeta_t, \xi_t$ are independent random noises. It can be regarded as a continuous HMM with linear transition and emission. We choose $A, B$ as random orthogonal matrices with $n = m$ and $\zeta_t, \xi_t$ standard Gaussian noises for simplicity. 
It's worth noting that predicting the belief state and the next observation is equivalent in this context, given that $B$ is orthogonal. Therefore, our focus lies on next-observation prediction, distinguishing it from the \texttt{HMM} model.

\subsection{Structured HMM Models and Variants}

\paragraph{\texttt{MatMul}: Matrix Multiplication.}
\label{sec:env_matmul}

This model is directly motivated from the belief state inference problem, which can be specified by $|\caO|$ matrices $A_1, ..., A_{|\caO|}$. With a little abuse of notations, given the state $\bm{b}_t \in \dbR^n$ at step $t$ and an observation $o_{t+1}$ at next step, the next state is recursively defined as
$\bm{b}_{t+1} \defeq A_{o_{t+1}} \bm{b}_{t}.$
Compared to the standard belief state inference equation, the only difference in this model is the absence of the $\ell_1$ normalization after the matrix multiplication (c.f. Eqn. (\ref{eqn:belief_state_update})). In order to stabilize the system (i.e., preventing the $\ell_2$ norm of $\bm{b}_t$ from exploding exponentially over time), we generate orthonormal matrix $A_o$ for all $o \in \caO$. For simplicity, the observation $o_{t}$ at each step follows a uniform distribution over the observation space $\caO$.

\paragraph{\texttt{CyclicHMM-DET}: Deterministic Cyclic HMM.}
\label{sec:env_cycliceasy}

\begin{figure*}[t]
    \centering
    \includegraphics[width = 0.8\textwidth]{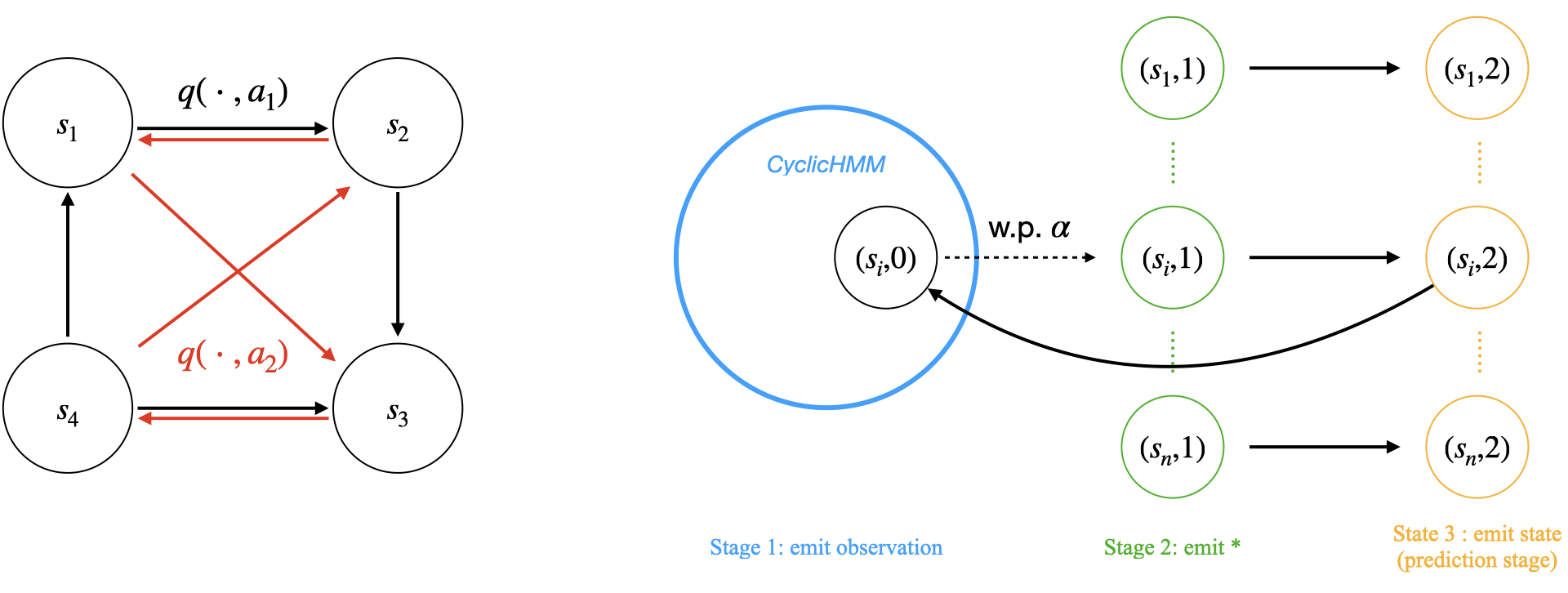}
    \caption{An illustration of \CE model and \CH model. Left: A \CE model with 4 states and 2 actions. The transition graph of each action is a cyclic permutation over the state space. Different actions may induce different cyclic permutation. Right: Given a \CR or \CE model, the \CH model transforms it into a larger HMM. The transition in \CH model always goes from stage 1 to 3 then back to stage 1. The dotted line denotes a stochastic transition from stage 1 to stage 2 with probability $\alpha$, and the solid line denotes deterministic transition. States in stage 2 always emit a signal observation $*$ indicating the entrance of stage 3, and states in stage 3 emit the current state as observation. }
    \label{fig:model}
\end{figure*} 


Consider a deterministic cyclic automaton \citep{liu2022transformers} with $n$ states, $m$ actions, and the state transition function $q: [n] \times [m] \to [n]$. For each action $i \in [m]$, the state transition $q(\cdot, i)$ forms a cyclic permutation (i.e., a single cycle) over the state space (see the left of Figure \ref{fig:model} for an example). Let $A_{i}$ be the permutation matrix of this cycle, $\bm{b}_t \in \dbR^n$ be the one-hot encoding of the state $s_t$ at step $t$, $a_t \in [m]$ be the action at step $t$, then the updating rule is 
$\bm{b}_{t+1} \defeq A_{a_t} \bm{b}_{t}$,
where action $a_{t}$ is assumed to be sampled from a uniform distribution over the action space.

Although this can be viewed as a matrix multiplication task with different choices of  $A_{a}$ ($a\in [m]$), it is also 
equivalent to an HMM with $nm$ states and $m$ observations (c.f. Proposition \ref{prop:equivalence_cycliceasy_cyclichmm}). Moreover, the mixing speed of this model is zero because current belief state is influenced by the entire observation history.


\paragraph{\texttt{CyclicHMM-RND}: Stochastic Cyclic HMM.}
\label{sec:env_cyclicHMM}

Proposition \ref{prop:equivalence_cycliceasy_cyclichmm} indeed presents a robust argument, suggesting that any finite automaton augmented with random transitions has an equivalent HMM representation. This insight prompts us to introduce some level of randomness into the cyclic automaton to generate a stochastic cyclic HMM. However, the randomness must be carefully calibrated; otherwise, it may transform into a fast-mixing model, making it easy for neural networks to fit. For any cyclic permutation induced by an action $i \in [m]$, we simply introduce a small probability $\varepsilon$ for state $s$ to transit to its predecessor when taking action $i$. The transition probability to its successor $q(s, i)$ is $1 - \varepsilon$. 


\paragraph{\CH: Cyclic HMM with multiple stages.}
\label{sec:env_cyclichmmhard}

The task of next-observation prediction is straightforward for the three structured HMMs introduced earlier, as the next observation always follows a uniform distribution. To investigate the difficulty of next-observation prediction in these structured models, we devise a variant of the \CE model depicted in Figure \ref{fig:model} (right part).
 Consider a \CE model with state space $\caS$, observation space $\caO$, transition probability $\dbP(s' \mid s)$, and emission probability $\dbO(o \mid s)$, we construct the \CH model as follows, given a prediction rate $0 < \alpha < 1$. 
 
 The \CH model comprises three stages from left to right, with each stage having an independent copy of state space $\caS$.
The transition and emission probabilities of states in the first stage are almost identical to the \CE model, except that each state in the first stage has a small probability $\alpha$ of transitioning to the second stage. The states in the second stage always emit a prediction signal $*$ as the observation and transition to the last stage. The final stage, also called the prediction stage, has states that always emit an observation indicating the state itself and then transition back to the first stage.

Our specific interest lies in the next-observation prediction accuracy of states in the third stage, which is equivalent to predicting the state after a random length of transitions. The formal definition is provided as follows:

\begin{itemize}
    \item The state space consists of pairs $(s, i)$ for any $s \in \caS$ and $i \in \{0, 1, 2\}$, the observation space is defined as the union set $\caS \cup \caO \cup \{*\}$.
    \item For all states $(s, 0)$ in the first stage, it transitions to $(s', 0)$ with probability $(1 - \alpha) \dbP(s' \mid s)$, and transitions to $(s, 1)$ with probability $\alpha$. It emits $o \in \caO$ with probability $\dbO(o \mid s)$.
    \item For all states $(s, 1)$ in the second stage, it transits to $(s, 2)$ with probability 1 and emits $*$ with probability 1.
    \item For all states $(s, 2)$ in the final stage (i.e., the prediction stage), it transition to $(s, 0)$ with probability 1 and emits $s$ with probability 1.
\end{itemize}

\begin{remark}[Why is \CH model hard?]

It is hard because it not only has long history dependency to infer the current state at the prediction stage, but contains a large portion of uninformative observations in training. The transformers cannot fit the sequence when these two conditions hold simultaneously. Specifically, the target state after obtaining a sequence of observations will be revealed only when the target state is at the prediction stage. This happens rarely since the probability $\alpha$ from the first stage to the second stage is very small, which leads to highly insufficient hidden information and poses a great challenge for the Transformers. 

\end{remark}

\section{Experiments}
\label{sec:experiments}

We systematically conducted experiments to assess the learnability of RNNs and Transformers across various sequential models introduced in Section \ref{sec:model}. The results consistently highlight the superiority of RNNs over Transformers in terms of both convergence speed and evaluation accuracy across all tasks. Additionally, to delve deeper into the efficiency of Transformers with varying depths, we illustrate an approximate scaling relationship between sequence length and required depth in Figure \ref{fig:length_layer_scale}.

The HMM models exhibit distinct patterns in terms of scaling. Fast-mixing models demonstrate compatibility with constant-depth Transformers, indicating ease of learnability of these models. The scaling of structured HMMs for the belief state inference task is constrained to at most $\log T$ for a specific sequence length $T$. The most challenging task lies in predicting the next observation in an HMM with a specific structure, where Transformers of different depths consistently struggle to fit a sequence of constant length.


\subsection{Experimental Design}
\label{sec:experimental_design}

\paragraph{Training and evaluation data.} 

For a given HMM, we initiate by generating a random instance $\caM$. Subsequently, we roll out $N_{\operatorname{train}} = 5 \times 10^6$ trajectories, each of length $T = 120$, forming the training dataset. In a trajectory 
$(s_0, o_1, s_1, ..., o_T, s_T, o_{T+1}, s_{T+1})$,
the input sequence is consistently $(o_1, o_2, ..., o_T)$. The target sequence is defined as $(\bm{b}_1, \bm{b}_2, ..., \bm{b}_T)$ for belief state inference (where belief states are computed using (\ref{eqn:belief_state_update})), or $(o_2, o_3, ..., o_{T+1})$ for next-observation prediction. All trajectories are trained in a random order within a single epoch. To ensure fair comparison among different neural network models, we keep the instance $\caM$ fixed for a particular HMM. For evaluating trained neural networks, we generate fresh data using $\caM$, and the reported evaluation loss is the average loss across $E = 256$ trajectories.


\paragraph{Model Training.} 
We employ a standard decoder-only Transformer with learnable positional encoding \citep{radford2019language} for both belief state inference and next-observation prediction. 
The depth $L$ of the Transformer is varied from 1 to 7. The RNN model is always single-layer and takes the raw sequence as input. Both models are trained by AdamW optimizer \citep{loshchilov2017decoupled} with the MSE loss (for \texttt{MatMul} and \texttt{LDS}) or cross entropy loss (for others). The total training epochs for both models are 100. Additional details and hyperparameters can be found in Appendix \ref{appendix:experiments}.

\paragraph{Evaluation metric.} 
At the end of each epoch, we roll out $E$ fresh trajectories from the instance to evaluate the neural network. Suppose the sequence predicted by neural network is $(\hat{x}_1, \hat{x}_2, .., \hat{x}_T)$, which is the predicted belief state or the predicted next observation distribution. Given the groundtruth sequence $(x_1, x_2, .., x_T)$, the evaluation loss at length $t$ is defined as $\operatorname{el}_t \defeq \|\hat{x}_t - x_t\|_p / (3 - p)$, where $p=2$ for \MM and \texttt{LDS}\footnote{The norm of $\|x_t\|$ is increasing in \LDS model, so we compute the relative loss to be $\|\hat{x}_t - x_t\|_2 / \max(1, \|x_t\|_2)$ in this case.}, and $p=1$ for others (because $x_t$ and $\hat{x}_t$ are distributions so it is essentially the total variation distance). 

In practice, Transformers tend to learn more slowly than RNNs and are more sensitive to dataset randomness and optimization. Consequently, Transformers may not successfully fit the sequential model at the full length $T$. We consider Transformers to successfully fit the model at length $t$ with an error rate of 
$\epsilon$ if $\operatorname{el}_t < \epsilon$ for any $t$ starting from some epoch, where $\epsilon$ is chosen as $0.05$ or $0.1$ in the paper. The maximal length at which Transformers successfully fit at an error rate of $\epsilon$ is also referred to as the $\epsilon$-fit length.

\subsection{Curriculum Training}
\label{sec:curriculum_training}

Given that fitting long sequences of length 
$T$ directly from scratch might pose challenges for Transformers, curriculum learning emerges as a widely adopted strategy to expedite and stabilize training \citep{spitkovsky2010baby, wang2021survey, garg2022can}. Curriculum learning involves dividing the training dataset into different subsets (curriculum stages) based on the measure of difficulty, and regularly switching the training data subset from simple ones to challenging ones.

In HMM models, a natural difficulty measure is the length of the training sequences \citep{spitkovsky2010baby, garg2022can}. Following this curriculum design, we regularly increase the training sequence length until it reaches $T$. Motivated by the theoretical insight that an $L$-layer Transformer has a fit length of at least $2^L$ for HMMs (cf. Section \ref{sec:theory_tf}), we adopt a doubling curriculum schedule. Commencing from length $2^L$, this curriculum schedule doubles the length of the training sequence after a fixed number of epochs. The total number of curriculum stages is set to $8 - L$. The 7-layer Transformer is supposed to have only one stage at $T=120$, the 6-layer Transformer has two stages at length 64 and 120, etc.

\subsection{Experimental Results}
\label{sec:experimental_results}

\paragraph{Comparisons between RNNs and Transformers.} The comparison between RNNs and Transformers involves assessing the convergence rate, evaluation accuracy, and fit length, as depicted in Figure \ref{fig:rnn_vs_tf_eval}. We select four HMMs representing different categories and showcase the evaluation loss at specific sequence lengths for different neural networks during training. Across all four tasks, RNN consistently converges faster than all Transformers. The evaluation accuracy of RNN is consistently superior or as small as that of Transformers at all steps during training. Consequently, the fit length of RNN is at least as long as that of Transformers. We do not plot the evaluation loss at the full length 
$T$ because most shallow Transformers struggle to fit length $T$, whereas RNN achieves a 
0.05-fit length of $T$ across all tasks.

\begin{figure*}[t]
    \centering
    \includegraphics[width = \textwidth]{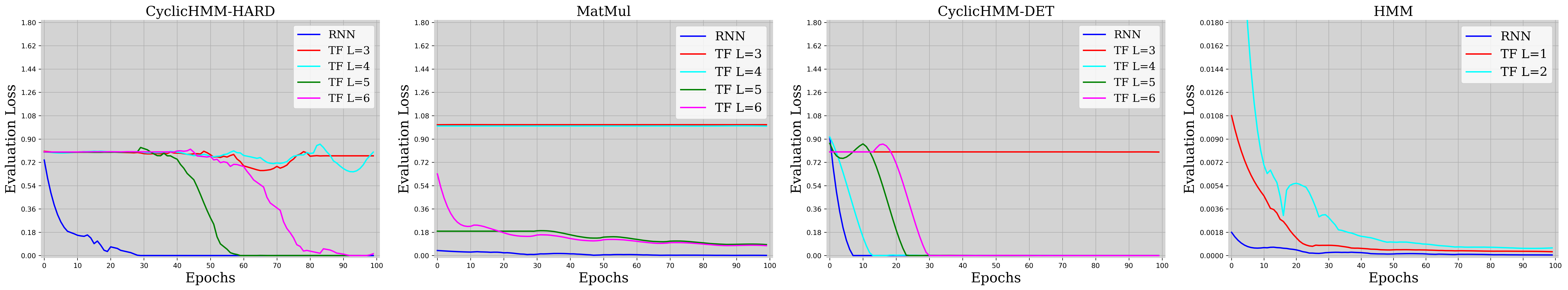}
    \caption{The evaluation loss at a specific sequence length of neural networks for 4 HMMs. To illustrate the difference between RNNs and Transformers of different depth, we choose the evaluation sequence length as 10, 30, 30, 120 for 4 tasks from left to right respectively. The evaluation loss of \CH model only considers the states at prediction stage since the prediction for other stages is simply a constant. The convergence speed and final accuracy of RNN are at least as good as all Transformers, which are strictly better in many cases.}
    \label{fig:rnn_vs_tf_eval}
\end{figure*} 

\paragraph{Scaling between fit length and depth.}

Since shallow Transformers cannot fit all the HMMs at full length $T$, we provide an illustration of the scaling between fit length and depth for different Transformers in Figure \ref{fig:length_layer_scale} (reported as the best value of 4 experiments of different seeds and w./w.o. curriculum training). The left two figures show the fit length of error rate 0.05 and 0.1 for all tasks. The scaling curves reveal that tasks can be roughly categorized into three classes based on Transformer performance. Fast-mixing tasks \HMM and \LDS can be learned by constant-depth Transformers. The most challenging \CH task cannot be fitted by an 
$L(\leq 7)$-layer Transformer even at constant length, while other tasks exhibit at least an exponential dependency between fit length and depth. The exponential $2^L$ scaling, as evident in the figure, aligns with our theoretical constructions discussed in the next section\footnote{The only exception is the result of 7-layer Transformer on \MM task. The fit length of this task is still below 26 even if we have tries various tricks (e.g., different curriculum schedule, more training epochs, etc.). We conjecture this is because the training of \MM task converges slower than others (c.f. Appendix \ref{appendix:failure_7tf}).}. It is also observed during training that the Transformers suffer from optimization instability occasionally, which results in several decreasing trends on the curve.

\begin{figure*}[t]
    \centering
    \includegraphics[width = \textwidth]{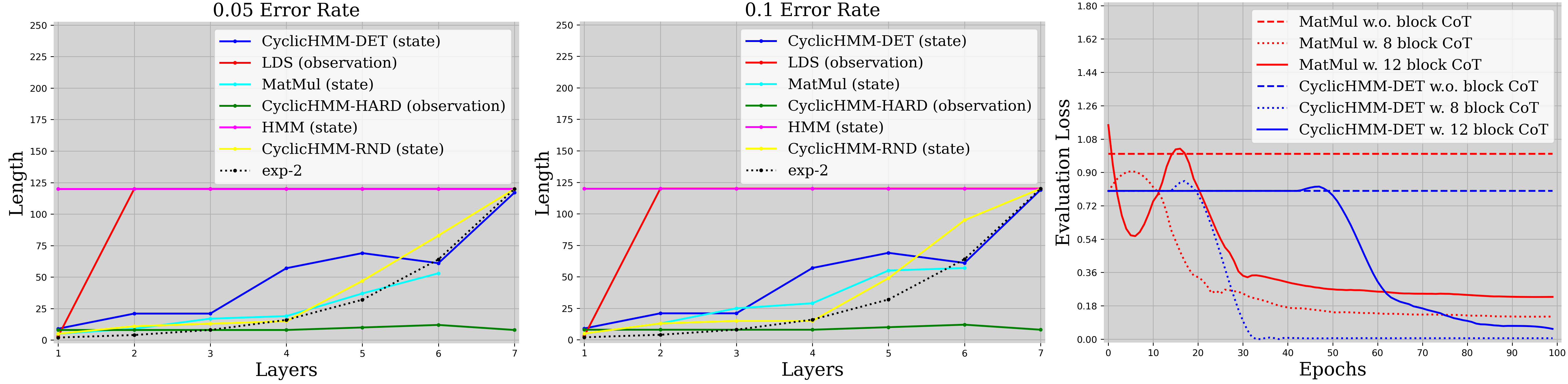}
    \caption{The left and middle figure: The approximate scaling between fit length of error rate 0.05/0.1 and the depth of the Transformer. ``State" denotes a belief state inference task, and ``observation" denotes a next-observation prediction task. There are roughly 3 different scaling pattern for the tasks, which is affected by the mixing time and hidden information in training data. The closeness between 0.05-fit length and 0.1-fit length reflects a small possibility of optimization caveats in the curves.
    The right figure: The evaluation loss at length 60 for 8/12 block CoT training for 3-layer Transformer on \CE task and 4-layer Transformer on \MM task. None of them has fit length 60 (dashed curves) without block CoT. The evaluation loss reduces dramatically with 8/12 block CoT, where 8/12 is approximately the half of their 0.05-fit length.}
    \label{fig:length_layer_scale}
\end{figure*} 


\paragraph{The benefits of curriculum training.}

In practice, we observed that the double schedule curriculum training proves beneficial in terms of training time, convergence speed, and fit length. If the number of curriculum stages is denoted as 
$C$, the double schedule effectively reduces the training time by a multiplicative factor proportional to $1/C$. This is because the training time scales quadratically with the sequence length. Additionally, it facilitates faster convergence and/or longer fit length for certain Transformers, as demonstrated in  Figure \ref{fig:double_vs_none} in the Appendix. 

\subsection{Block Chain-of-Thought}
\label{sec:bcot}

For those tasks cannot be fitted by constant-depth Transformers, we investigate the possibility of applying Chain-of-Thought (CoT) \citep{wei2022chain, feng2023towards} or scratchpad training \citep{nye2021show} to reduce the required depth of the Transformers at a cost of extra training time. Intuitively, it works for the tasks that is accessible to sufficient hidden information, such as the hidden belief states or informative observations (i.e., observations helpful to infer the hidden state). For example, the belief state inference tasks \MM and \CE, as they have belief states as labels, are likely to benefit from the CoT training. Unfortunately, the \CH model does not have access to belief states nor informative observations for states in the first stage during training (since the observations are uniformly generated for states in the first stage), so it does not enjoy the benefits. 

CoT training involves using the output of the Transformer at each step as the input to predict the next token autoregressively. However, this approach can be highly inefficient\footnote{While we could use the hidden belief state in training labels as the autoregressive input to avoid computing the output of the Transformer, this becomes challenging when sufficient hidden information is lacking in reality.}. Leveraging the scaling law of fit length for layers, we can feed the output back into the Transformer every 
$b$ steps, where $b>1$ is a constant whose value can be guided by the fit length of the Transformer. We refer to this as $b$-block CoT training, which reduces the forward passes to $1/b$ of the original CoT training. $b$-block CoT training significantly reduces the evaluation loss of shallow Transformers at long sequence length, as demonstrated in the right side of Figure \ref{fig:length_layer_scale}. In this picture, two Transformers of layer 3 and 4 with
0.1-fit length much smaller than 60 can fit a sequence of length 60 after 8-block CoT training. This showcases the effectiveness of $b$-block CoT training on tasks like \MM and \CE.

The training time of $b$-block CoT would be $\sum_{i=1}^{\lfloor T/b \rfloor} (ib)^2 \sim \Theta(T^3/b)$. Choosing $b=T$ reduces to the standard training without block CoT, and choosing $b=1$ is equivalent to vanilla CoT. This relationship can be approximately verified from Table \ref{tab:time_bCoT}. 

\section{The Expressiveness Power of RNN and Transformer}
\label{sec:theory}

In order to understand the experimental results (e.g., the strong inductive bias of RNN, the logarithmic scaling of the fit length of Transformers, the hardness of \CH model, etc.), we ask the question whether there exists an RNN or Transformer that can produce (or express) these sequences (e.g., the belief state or observation distributions) in a theoretical perspective. This question has already been answered in general since it is known the RNN and Transformer can approximate a large number of sequence-to-sequence models \citep{schafer2006recurrent, yun2019transformers}
given sufficient depth or width. In contrast, our interest lies in the \emph{representation efficiency} of the neural networks to approximate the sequential models. This prompts the question of how large the neural networks should be to effectively approximate them.




\subsection{RNN}
\label{sec:theory_rnn}


First of all, we show that RNNs can approximate HMMs with determinsitic transition conveniently.

\begin{theorem}
\label{thm:rnn_expressiveness}
For a deterministic HMM with state space size $n$ and observation space size $m$, there exists a single layer RNN model with embedding dimension $d = O(nm)$ and ReLU activation approximating the belief state sequence of length $T$ with no error. The $\ell_\infty$ norm of the parameters of the RNN model is bounded by $O(\|\bm{b}_0\|_2)$.
\end{theorem}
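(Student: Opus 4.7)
The plan is to exploit the hypothesis that the HMM is deterministic, meaning that along any observed prefix $o_1,\ldots,o_t$ the belief $\bm{b}_t$ collapses to a one-hot vector $e_{s_t}$ and the update reduces to a fixed lookup function $\phi:[n]\times[m]\to[n]$ sending $(s_{t-1},o_t)$ to $s_t$. With this reduction in hand the target becomes concrete: build a single-layer ReLU RNN that implements $\phi$ exactly along the sequence, so that the hidden state at step $t$ decodes to $e_{s_t}$.

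The obstruction to a naive construction with $d=n$ is that $\mathrm{ReLU}(W_1 e_{o_t}+W_2 e_{s_{t-1}}+b)$ is additive in the two one-hot inputs and can only realize sums of univariate functions of $o_t$ and $s_{t-1}$, which is strictly weaker than the bilinear lookup $\phi$. To overcome this I would lift the hidden state into the product space: set $d=nm$ and index $h_t\in\dbR^{nm}$ by pairs $(s,o)$, maintaining the invariant $h_t[(s,o)]=\mathbb{1}[s_t=s\wedge o_t=o]$. The ReLU AND gadget $\mathrm{ReLU}(a+b-1)=\mathbb{1}[a=b=1]$ for $a,b\in\{0,1\}$ then fuses the two one-hot signals. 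Concretely I would take $W_2[(s',o'),(s,o'')]=\mathbb{1}[\phi(s,o')=s']$ (deliberately independent of $o''$, so that $(W_2 h_{t-1})[(s',o')]=\mathbb{1}[\phi(s_{t-1},o')=s']$), $W_1[(s',o'),o]=\mathbb{1}[o=o']$ (so that $(W_1 e_{o_t})[(s',o')]=\mathbb{1}[o'=o_t]$), and $b\equiv -1$; after ReLU this recovers $h_t[(s',o')]=\mathbb{1}[s_t=s'\wedge o_t=o']$. The recursion is seeded by $h_0[(s,o)]=\mathbb{1}[s=s_0\wedge o=o_*]$ for an arbitrary fixed $o_*$, which suffices because only the marginal $\sum_o h_0[(s,o)]=e_{s_0}[s]$ enters the first update. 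The belief at time $t$ is then read off by the linear decoder $\bm{b}_t[s]=\sum_{o'} h_t[(s,o')]$, which equals $e_{s_t}[s]$ by the invariant.

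All entries of $W_1,W_2,b$ and of the decoder lie in $\{-1,0,1\}$, so the $\ell_\infty$ norm of the parameters is at most $1$, comfortably within $O(\|\bm{b}_0\|_2)$ since $\bm{b}_0$ is one-hot. The main obstacle I anticipate is conceptual rather than computational: pinning down the precise meaning of ``deterministic HMM'' (deterministic transitions, observation-determined states, or the automaton-to-HMM reduction used for \CE) and certifying that in each admissible case the belief update is indeed captured by a single-valued lookup $\phi$. Once that reduction is established, inductively propagating the invariant $h_t[(s,o)]=\mathbb{1}[s_t=s\wedge o_t=o]$ along the sequence is routine and yields exact reconstruction for any length $T$.
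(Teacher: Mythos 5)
Your construction is correct, and it follows the same high-level design as the paper's proof: both lift the hidden state to dimension $d=nm$, with one size-$n$ block per observation symbol, and use the ReLU to zero out every block except the one indexed by the current observation, recovering the effective state as a linear combination of the blocks at the next step. The difference is in the selection gadget. The paper keeps each block real-valued, storing $\operatorname{ReLU}\left(A_o \bar{h}_{t-1} + \alpha \bm{1}\bm{e}_o^\top\bm{e}_{o_t} - \frac{\alpha}{2}\bm{1}\right)$ and choosing the threshold $\alpha = 4\|\bm{b}_0\|_2 > 2\max_{o,t}\|A_o\bar{h}_t\|_\infty$ so that unselected blocks are pushed below zero; this is exactly where the $O(\|\bm{b}_0\|_2)$ parameter bound in the statement comes from, and it lets the identical construction handle the real-valued \MM recursion $\bar{h}_{t+1}=A_{o_{t+1}}\bar{h}_t$ with orthonormal $A_o$, not only one-hot beliefs. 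Your Boolean AND gadget $\operatorname{ReLU}(a+b-1)$ is more elementary and yields parameters bounded by $1$, but it is tied to the $\{0,1\}$ setting and would not extend to real-valued states. For the theorem as stated both arguments succeed, and yours additionally handles observation-dependent transitions $\phi(s,o)$ directly (the \CE case); your residual worry about the meaning of ``deterministic HMM'' is resolved in the paper exactly as you anticipate: deterministic transition matrix, hence a point-mass belief whose update is a single-valued lookup.
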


The proof can be found in Appendix \ref{appendix:proof_rnn_expressiveness}.

\subsection{Transformer}
\label{sec:theory_tf}

To avoid the unrealistic assumption that the neurons in Transformers have infinite precision \citep{dehghani2018universal, perez2019turing}, we require the neurons to be floating-point numbers of finite precision in this paper. All the floating-point number computations will be rounded back (e.g. round-up, round-down, round-to-the-nearest) to a floating-point number as in a classical computer. To be specific, if we restrict the floating-point to have $O(\log T)$ bits, it can represent all real numbers of magnitude $O(\operatorname{poly}(T))$ with a $O(\operatorname{poly}(1/T))$ rounding error. Note that the rounding error of shallow neurons may propagate to the deeper layers 
and get magnified along the trajectory. After carefully analyzing the error propagation, we come to the following theorem in terms of the expressiveness of the Transformers for HMMs with deterministic transitions:

\begin{theorem}
\label{thm:tf_expressiveness}
For an HMM model with deterministic transition matrix, state space size $n$, and observation space size $m$, there exists a $\log T$-precision Transformer approximating the belief state sequence of length $T$ with $O(\operatorname{poly}(1/T))$ $\ell_\infty$ approximation error. The Transformer has depth $L = \lceil \log_2 T \rceil$, embedding dimension $2n^2 + 6$, MLP width $4n^3 + 2n^2$, and $H=2$ attention heads. The magnitude of the parameters in the Transformer is bounded by $O(\poly(T))$.
\end{theorem}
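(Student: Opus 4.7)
The plan is to implement a balanced parallel prefix-product inside the Transformer. For a deterministic HMM, the (unnormalised) belief update $\bm{b}_{t+1}\propto\mathrm{diag}(\dbO(o_{t+1}\mid\cdot))\dbP\,\bm{b}_t$ is linear in $\bm{b}_t$, so setting $A_o := \mathrm{diag}(\dbO(o\mid\cdot))\dbP$ gives $\bm{b}_t\propto A_{o_t}A_{o_{t-1}}\cdots A_{o_1}\bm{b}_0$, and the final $\ell_1$ renormalisation can be folded into a linear read-out. The key observation is that associativity of matrix multiplication admits an $O(\log T)$-depth reduction: if after layer $l$ the representation at position $t$ carries the length-$2^l$ suffix product $A_{o_t}\cdots A_{o_{\max(1,t-2^l+1)}}$, then $L=\lceil\log_2 T\rceil$ layers suffice for every position to hold the full prefix product.

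I would use the following embedding layout, which matches the dimension $2n^2+6$: the first $n^2$ coordinates of a token store the current suffix-product matrix (initialised via the token embedding $o_t\mapsto A_{o_t}$); the next $n^2$ coordinates are a scratch block to which the attention layer writes the matrix pulled from position $t-2^{l-1}$; the remaining six coordinates hold the position index, an offset register set to $2^{l-1}$ at layer $l$, a boundary flag for positions with $t\le 2^{l-1}$, and a couple of constant/bias entries used by the attention and FFN modules. At layer $l$, one attention head applies hard attention (inverse temperature $\Theta(\log T)$) with a query/key pair that is maximised at the positional index $t-2^{l-1}$ and whose value copies the first $n^2$ coordinates into the scratch block; the second head implements the boundary fallback, emitting the identity matrix when $t\le 2^{l-1}$ so that short prefixes are left unchanged. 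The FFN then overwrites the first $n^2$ coordinates with the $n\times n$ product of the current block and the scratch block, using the standard GeLU-based bilinear gadget $xy\approx c_1\,\mathrm{GeLU}(x+y)+c_2\,\mathrm{GeLU}(x-y)+\text{linear}$; each of the $n^2$ output entries is a sum of $n$ such bilinear monomials, giving $4n^3$ hidden units for the products and $2n^2$ units for aggregation. After $L$ layers, the first $n^2$ coordinates at position $t$ hold $M_t=A_{o_t}\cdots A_{o_1}\bm{b}_0$ (viewed as a matrix with a single nontrivial column after contracting with $\bm{b}_0$), and a linear read-out performs the $\ell_1$ renormalisation.

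The main obstacle is controlling finite-precision error through the $L$-layer cascade. With $\log T$-bit floats, every arithmetic operation incurs error $O(T^{-c})$ for a tunable constant $c$, but one must show that (i) the attention softmax concentrates tightly enough to approximate hard attention, and (ii) matrix-product error amplification across layers stays under control. For (i) I would set $\gamma=\Theta(\log T)$ so that the softmax over $T$ keys is within $O(T^{-c})$ of the indicator on $t-2^{l-1}$, using the fact that the positional gap between the target and next-nearest key is at least $1$. For (ii) the key facts are that each $A_o$ has entries in $[0,1]$ and that partial-product norms remain $\mathrm{poly}(n)$ throughout, so a single-layer update obeys $\varepsilon_l\le C(n)\,\varepsilon_{l-1}+O(T^{-c})$ with $C(n)$ independent of $T$. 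Unrolling over $L=\lceil\log_2 T\rceil$ layers yields $\varepsilon_L\le C(n)^{L}\cdot O(T^{-c})=T^{\log_2 C(n)}\cdot O(T^{-c})$, which is $O(\mathrm{poly}(1/T))$ once $c$ is chosen sufficiently large, and this bound survives the final $\ell_1$ renormalisation because the denominator $\|M_t\|_1$ is bounded below by a fixed inverse polynomial in $T$ (or, if zero, can be detected and handled by a zero test in the read-out). The parameter magnitudes in the construction are all $O(\mathrm{poly}(T))$, matching the claim.
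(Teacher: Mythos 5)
Your architecture is essentially the paper's: a divide-and-conquer prefix product over $\lceil\log_2 T\rceil$ layers, one attention head fetching the block at position $t-2^{l-1}$, a second head handling the boundary, and an MLP of width $4n^3+2n^2$ implementing matrix multiplication via the GeLU bilinear gadget, with an error recursion $\varepsilon_l\le C(n)\varepsilon_{l-1}+O(T^{-c})$ unrolled over $\log T$ layers. But there is a genuine gap at the very first step: you set $A_o=\mathrm{diag}(\dbO(o\mid\cdot))\dbP$ and claim the $\ell_1$ renormalisation ``can be folded into a linear read-out.'' It cannot --- $x\mapsto x/\|x\|_1$ is not linear --- and, worse, the entries of $A_{o_t}\cdots A_{o_1}$ with emission weighting can decay like $\exp(-t)$, so $\|M_t\|_1$ is \emph{not} bounded below by any inverse polynomial in $T$. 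This is exactly why the paper's stochastic-transition result (Theorem \ref{thm:tf_expressiveness_stochastic_hmm}) needs $T$-precision and an extra $O(\log T)$-layer normalising MLP. The observation that makes the $\log T$-precision claim work here is that a \emph{deterministic transition} with known initial state forces the belief state to be one-hot at every step, so the emission weighting and the normalisation are vacuous: one takes $A_o=\dbP$ for all $o$ (a $0/1$ matrix mapping one-hot vectors to one-hot vectors, hence norm-preserving), and the readout is genuinely linear. Without this reduction your argument does not establish the theorem as stated.

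A secondary quantitative issue: with sinusoidal positional encodings $(\sin(\pi t/4T),\cos(\pi t/4T))$, the gap in the attention score between the target key $t-2^{l-1}$ and its nearest competitor is $\Theta(1/T^2)$ (cf.\ Lemma \ref{lem:sin_pe}), not $\Omega(1)$, so an inverse temperature $\gamma=\Theta(\log T)$ does not make the softmax concentrate; the paper takes $\gamma=\Theta(T\log(T/\eta))$, which still respects the $O(\poly(T))$ parameter bound. Your claim that ``the positional gap between the target and next-nearest key is at least $1$'' conflates the gap in the index with the gap in the bilinear score; it would only hold if you encoded positions with features rich enough to realise a score like $-(t_i-2^{l-1}-t_j)^2$, which your layout does not specify. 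Both issues are repairable, but as written the proof does not go through at $\log T$ precision.
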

\begin{proof}[Proof Sketch.]
For an HMM with deterministic transition, the belief state $\bm{b}_t$ is always a one-hot vector. Thus, we can reduce the HMM to a \MM model with state dimension $n$ and observation space size $m$ by setting $A_o = \dbP$ for all $o \in \caO$. It suffices to consider how to approximate the \MM model. Let $A_{i:j} \defeq \prod_{k=i}^j A_{o_k}$, then the output sequence of the Transformer should be $(A_{1:i} \bm{b}_0)_{i=1}^T$. The intuition to produce such sequence with $L$ layers is the divide-and-conquer approach \citep{liu2022transformers} computing the matrix multiplications 
\begin{equation}
\label{eqn:tf_divide_conquer_each_layer}
A_{\max(1, j - 2^l):j} = A_{\max(j - 2^l, 1):\max(j - 2^{l-1}, 0)} \times A_{\max(j - 2^{l-1} + 1, 1):j}
\end{equation}
as the output of layer $l$ at position $j$, where $A_{1:0} \defeq \bm{I}$. 
The analysis of the error propagation is somehow more complicated here than other divide-and-conquer construction due to the continuous representation of the state. 
The full proof can be found in Appendix \ref{appendix:full_proof_tf_expressiveness}.
\end{proof}

Approximating the stochastic HMMs is more challenging since they cannot be reduced to a \MM model due to the $\ell_1$ normalization step (c.f. Eqn. (\ref{eqn:belief_state_update})). We show that if the Transformers have $T$-precision (i.e., $O(T)$ bits) and an MLP at the end of the last attention block in place of the linear $\mathrm{DECODER}$ layer, then it is possible for them to approximate stochastic HMMs with constant stochastic transition matrix and emission matrix:

\begin{theorem}
\label{thm:tf_expressiveness_stochastic_hmm}
For an HMM model with state space size $n$ and observation space size $m$ whose entries in transition matrix and emission matrix are uniformly lower bounded by $\sqrt{c_l}$, there exists a $T$-precision Transformer approximating the belief state sequence of length $T$ followed by an MLP with $O(\log T)$ layers and $O(n)$ width. The $\ell_\infty$ approximation error of the neural network is $O(\exp(-T))$. The Transformer has depth $L = \lceil \log_2 T \rceil$, embedding dimension $2n^2 + 6$, MLP width $4n^3 + 2n^2$, and $H=2$ attention heads.
\end{theorem}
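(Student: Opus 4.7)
The plan is to reduce the stochastic belief-state update to a pure matrix-multiplication task (like the one handled by Theorem~\ref{thm:tf_expressiveness}), invoke the same divide-and-conquer construction to compute the unnormalized belief, and then append a small MLP postprocessor that performs the $\ell_1$ normalization.

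First I would introduce the matrices $M_o \defeq \mathrm{diag}(\dbO(o\mid\cdot))\,\dbP \in \dbR^{n\times n}$, indexed by observation $o$. Unrolling Eqn.~(\ref{eqn:belief_state_update}) shows that the \emph{unnormalized} belief $\tilde{\bm{b}}_t \defeq M_{o_t}\cdots M_{o_1} \bm{b}_0$ satisfies the clean recursion $\tilde{\bm{b}}_{t+1} = M_{o_{t+1}} \tilde{\bm{b}}_t$, while the true belief is $\bm{b}_t = \tilde{\bm{b}}_t / \|\tilde{\bm{b}}_t\|_1$. Thus the stochastic case reduces to a \MM-style task of dimension $n$ followed by a single scalar division per position. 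I would then re-use the divide-and-conquer Transformer from the proof of Theorem~\ref{thm:tf_expressiveness} verbatim, embedding $M_{o_t}$ at position $t$ and computing the partial products $M_{1:t}$ via the recurrence in Eqn.~(\ref{eqn:tf_divide_conquer_each_layer}); a final linear head multiplies by $\bm{b}_0$ to emit $\tilde{\bm{b}}_t$. This uses exactly the depth, embedding dimension, MLP width, and two attention heads stated in the theorem.

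The core difficulty lies in controlling numerical precision and then implementing the division. The lower bound $\sqrt{c_l}$ on the entries of $\dbP$ and $\dbO$ forces every entry of $M_o$ to be at least $c_l$, from which one can inductively show that every coordinate $\tilde{\bm{b}}_t(s)$ lies in the interval $[c_l^{\,t}\,\|\bm{b}_0\|_\infty,\, \|\bm{b}_0\|_1]$; hence representing $\tilde{\bm{b}}_t$ faithfully costs $\Theta(t \log(1/c_l)) = O(T)$ bits, which matches the $T$-precision assumption. Propagating the per-step rounding error through the $\log T$ layers of the divide-and-conquer tree and bounding the accumulated error by $\exp(-T)$ proceeds as in Theorem~\ref{thm:tf_expressiveness}, but with tighter bookkeeping because the entries involved can be exponentially small. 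For the postprocessor, both $\tilde{\bm{b}}_t$ and the scalar $\|\tilde{\bm{b}}_t\|_1 = \bm{1}^\top \tilde{\bm{b}}_t$ are linear functions of the Transformer's final-layer output, so the remaining task is to approximate $1/\|\tilde{\bm{b}}_t\|_1 \in [1,\, c_l^{-T}]$ to additive error $\exp(-T)$. I would implement this with a GeLU MLP of width $O(n)$ that runs $O(\log T)$ Newton iterations $x_{k+1} = x_k(2 - y x_k)$ for the reciprocal of $y = \|\tilde{\bm{b}}_t\|_1$; each iteration quadratically doubles the accuracy, so $O(\log T)$ layers suffice, and a final coordinatewise multiplication $\tilde{\bm{b}}_t \cdot x_K$ recovers $\bm{b}_t$.

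The main obstacle is precisely this error-propagation analysis in the presence of exponentially small quantities: the divide-and-conquer tree multiplies up to $T$ factors whose product can collapse to magnitude $c_l^{T}$, so any constant-degree polynomial approximation of $1/y$ is hopeless, and Newton iteration (with its doubling convergence) is what makes $O(\log T)$ depth suffice. Carefully tracking how each rounding error in the attention layers interacts with the downstream Newton iterations, and verifying that $T$-precision is genuinely sufficient (rather than just necessary) at every stage, is the most delicate part of the proof.
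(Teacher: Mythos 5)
Your reduction to a \MM-style task via $M_o \defeq \mathrm{diag}(\dbO(o\mid\cdot))\,\dbP$, the reuse of the divide-and-conquer Transformer from Theorem \ref{thm:tf_expressiveness}, and the observation that entries of the partial products can shrink to $c_l^{T}$ (hence the need for $T$-precision) all coincide with the paper's argument. The divergence is in the normalizing MLP, and there your proposal has a genuine gap. Newton's iteration $x_{k+1}=x_k(2-yx_k)$ for $1/y$ satisfies $1-yx_{k+1}=(1-yx_k)^2$, so it converges doubly exponentially \emph{only when the initial relative error $|1-yx_0|$ is bounded away from $1$ by a constant}. Here $y=\|\tilde{\bm{b}}_t\|_1$ ranges over $[c_l^{T},1]$; for any fixed initial iterate $x_0$ and $y$ near the bottom of that range, $1-yx_0 = 1-\Theta(c_l^{T})$, and driving $(1-yx_0)^{2^k}$ below $\exp(-T)$ then forces $2^k \gtrsim c_l^{-T}$, i.e.\ $k=\Omega(T)$ layers rather than $O(\log T)$. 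Nor can you precompute a good $x_0$ with a width-$O(n)$ layer: covering a dynamic range of $c_l^{-T}$ with candidate seeds needs $\Omega(T)$ of them.

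The missing ingredient is a range-reduction stage, which is exactly what the paper's proof spends its first $O(\log T)$ MLP layers on: it performs a binary search over the exponent, conditionally multiplying the current vector by $c_l^{\lfloor T/2^{k+1}\rfloor}$ (gated by a clipped-ReLU indicator of whether the running $\ell_1$ norm still exceeds $C_l^{\lfloor T/2^{k+1}\rfloor}$ with $C_l=1/c_l$) until $\|c_0\tilde{\bm{b}}_t\|_1$ lands in a constant interval. Only then does the paper invert the norm, and it does so not by Newton but by the truncated geometric series $1/(1-c_1)\approx 1+c_1+\cdots+c_1^{2^k-1}$ with $c_1<5/6$, which also needs $O(\log T)$ layers. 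Your Newton iteration would work perfectly well as a drop-in replacement for that second stage (and is arguably cleaner), but it cannot replace the first stage; you still need the $O(\log T)$-layer rescaling before either inversion scheme applies. With that stage added, your outline matches the theorem's claimed depth and width and the rest of your error bookkeeping goes through as in the paper.
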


\begin{proof}[Proof Sketch.]
Define $A_{o} \defeq \mathrm{diag}(\dbO(o \mid \cdot)) \dbP$ according to the belief state updating rule (\ref{eqn:belief_state_update}), then $$\bm{b}_{t+1} = \frac{A_{o_{t+1}} \bm{b}_t}{\|A_{o_{t+1}} \bm{b}_t\|_1}. $$

This is equivalent to computing $\tilde{\bm{b}}_T \defeq A_{1:T} \bm{b}_0$ and normalizing the $\ell_1$ norm of $\tilde{\bm{b}}_T$ to 1, so we still hope to follow the construction in Theorem \ref{thm:tf_expressiveness} to accomplish the computation. Unfortunately, the entries in $A_{i:j}$ may get exponentially small (i.e., $O(\exp(i-j))$) according to the definition of $A_o$, and we cannot simply round these numbers to 0. Therefore, the Transformers need $O(T)$ bits to store the entries of $A_{i:j}$. 

After the computation of $\tilde{\bm{b}}_T$, it remains to normalize this vector to ensure its $\ell_1$ norm exactly equals 1. We prove that this can be done by a $O(\log T)$ layer MLP with $O(n)$ width. The full proof can be found in Appendix \ref{appendix:proof_tf_expressiveness_stochastic_hmm}.

\end{proof}

\begin{remark}
Since any automaton can be formulated as an HMM (c.f., Appendix \ref{appendix:equivalence_auto_hmm}), the hardness results in \cite{liu2022transformers} also apply to our cases. That means there exists no $\log$-precision Transformers with depth independent of $T$ and width polynomial in $T$ that can approximate any HMMs with $O(1/\poly(T))$ error, unless $\mathrm{TC}^0 = \mathrm{NC}^1$ \footnote{Both are circuit complexity classes. Their relationship is widely conjectured to be $\mathrm{TC}^0 \subset \mathrm{NC}^1$. } \citep{feng2023towards, liu2022transformers}.

\end{remark}

\section{Conclusion and Discussion}
\label{sec:conclusion_discussion}


We study the effectiveness of Transformers in learning HMM models and its variants from both theoretical and empirical perspective, which is one of the most natural and fundamental sequential models. Apart from random HMM models, we construct structured HMM models with slow mixing speed and long history dependency to assess the accuracy of belief state inference or next-observation prediction by Transformers. The extensive experiments conducted reveal several limitations of Transformers in learning these tasks.

One limitation is the consistent underperformance of Transformers compared with RNN in terms of training speed, evaluation accuracy, and optimization stability. Another limitation is the failure of Transformers to learn challenging HMMs. Intuitively speaking, successful learning requires the HMM model to have fast mixing speed (such as random HMMs) or the training data to carry sufficient information about the hidden belief states (such as belief inference in \MM and \texttt{CyclicHMM-DET}). In other words, Transformers may fail completely for tasks with slow mixing speed and insufficient hidden information provided during training. This raises concerns for the application of Transformer-based RL in RL environments with slow mixing and rather uninformative observations.

In order to overcome this limitation, we integrated the block CoT tricks in the training phase, which greatly accelerates the procedure of learning HMMs for Transformers. At a cost of paying more training time, it enables shallow Transformers to fit longer sequences.

Finally, we observe different patterns in the scaling between fit length and the depth of the Transformer. For certain HMM models, we demonstrate an approximate logarithmic dependency between depth and fit length in both experiments and theory. 

\section{Acknowledgement}
Chi Jin gratefully acknowledges the funding support from Google LLC, Princeton Project X innovation fund, and NSF CAREER IIS2239297.

\bibliography{ref}

\newpage
\appendix
\section{Additional Background and Discussion}
\label{appendix:addtion_definition}

\subsection{The Transformer Architecture}

Different from the standard encoder-decoder Transformer architecture introduced by \cite{vaswani2017attention}, we employ the decoder-only Transformer in this paper to investigate its learnability of the sequential models. The decoder-only Transformers are largely applied on sequential text generations tasks, especially in large language models such as GPT-2 \citep{radford2019language}, GPT-4, LaMDA \citep{thoppilan2022lamda}, LLaMA \citep{touvron2023llama}, etc. Let $\bm{X}_{\operatorname{input}} \in \dbR^{T \times m}$ be the input sequence to the Transformer, where $T$ is the sequence length and $m$ is the input token dimension. The first block of the Transformer model is a position-wise linear encoder layer $\operatorname{ENCODER}: \dbR^{m} \to \dbR^d$ mapping each token in $\bm{X}_{\operatorname{input}}$ from $\dbR^m$ to $\dbR^d$, where $d$ is the embedding dimension of the Transformer. Let $\bm{X}^{(0)} \in \dbR^{T \times d}$ be the output of the linear encoder layer, it is then forwarded into $L$ attention blocks sequentially.

Each attention block consists of a self-attention layer with $H$ attention head and a two-layer MLP GeLU activated MLP. An implicit requirement for $H$ is that $H$ is a divisor of $d$ \citep{vaswani2017attention}. Let the input of the $l$-th attention block be $\bm{X}^{(l-1)} \in \dbR^{T \times d}$, it propagates through a self attention layer $\operatorname{Attn}^{(l)}$ at first, where
\begin{align}
\label{eqn:tf_attn}
\operatorname{Attn}^{(l)}(\boldsymbol{X}) = \operatorname{Concat}\left( \left\{\operatorname{softmax}\left(\boldsymbol{X} \boldsymbol{W}_Q^{(l, h)}\left(\boldsymbol{X} \boldsymbol{W}_K^{(l, h)}\right)^{\top}+\boldsymbol{M}\right) \boldsymbol{X}
\boldsymbol{W}_V^{(l, h)} \boldsymbol{W}_O^{(l, h)}\right\}_{h=1}^H\right).
\end{align}
Here $\bm{M} \in \{0, -\infty\}^{T \times T} $ is the causal mask matrix, which is defined as $\bm{M}_{ij} = -\infty$ iff $i < j$. In other words, the output of the self-attention layer is obtained by concatenating the outputs of all the attention heads, where $\bm{W}_Q^{(l,h)}, \bm{W}_K^{(l,h)}, \bm{W}_V^{(l,h)} \in \dbR^{d \times d}$ and $\bm{W}_O^{(l,h)} \in \dbR^{d \times (d/H)}$ are the query, key, value, and output matrix respectively. The output to the self-attention layer is linked with input $\bm{X}^{(l-1)}$ by a residual connection \citep{he2016deep} $\bm{Y}^{(l-1)} = \bm{X}^{(l-1)} + \operatorname{Attn}^{(l)}(\bm{X}^{(l-1)})$. After the self-attention layer, $\bm{Y}^{(l-1)}$ is then forwarded into a 2-layer feedforward network (MLP) with a residual connection as the output of the attention block:
\begin{align}
\label{eqn:tf_ffn}
\operatorname{FFN}^{(l)}\left(\bm{X}\right) =\sigma\left(\bm{X}\bm{W}_1^{(l)}\right)\bm{W}_2^{(l)}, \bm{X}^{(l)} = \bm{Y}^{(l-1)} + \operatorname{FFN}^{(l)}\left(\bm{Y}^{(l-1)}\right) .
\end{align}
The final output sequence of the transformer is obtained by feeding $\bm{X}^{(L)} \in \dbR^{T \times d}$ into a position-wise linear decoder layer $\operatorname{DECODER}$.

We also add two LayerNorm layer right before the multi-head attention and the MLP, and feed the final output to a LayerNorm layer as suggested by a pre-LN architecture \citep{baevski2018adaptive, wang2019learning, xiong2020layer}. The positional encoding is a learnable $d$-dim vector of length $256$.

\subsection{The Equivalence Between Automaton and HMM}
\label{appendix:equivalence_auto_hmm}

For an automaton $q: [n] \times [m] \to [n]$ with cyclic transition for each action (c.f. Figure \ref{fig:model}), let $A_i$ be the permutation matrix of the cyclic permutation induced by action $i$. Recall that the transition matrix $A_i$ of action $i \in [m]$ satisfies $A_i(s', s) = 1$ iff $q(s, i) = s'$ for any $s, s' \in [n]$. We construct the equivalent HMM model as follows:

\begin{itemize}
    \item The state space consists of all pairs $(s, o) \in [n] \times [m]$, while the observation space is the action space $[m]$ of the automaton.
    \item The transition probability is defined as $\dbP((s', o') \mid (s, o)) \defeq A_o(s', s) / m$.
    \item The emission probability is defined as $\dbO(o' \mid (s, o)) \defeq \bm{1}[o' = o]$.
\end{itemize}

The following proposition shows that each Markov decision process (MDP) with $n$ states and $m$ actions  uniform action probability is equivalent to some HMM assuming the action $a_t$ is uniformly chosen at random for each step $t$ by the following construction:

\begin{itemize}
    \item The state space consists of all pairs $(s, o) \in [n] \times [m]$, while the observation space is the action space $[m]$ of the MDP.
    \item The transition probability is defined as $\dbP((s', o') \mid (s, o)) \defeq P_o(s', s) / m$, where $P_o(s', s)$ is the probability of transiting to state $s'$ from state $s$ given action $o$.
    \item The emission probability is defined as $\dbO(o' \mid (s, o)) \defeq \bm{1}[o' = o]$.
\end{itemize}


\begin{proposition}
\label{prop:equivalence_cycliceasy_cyclichmm}
Given a For any $t \geq 0$, the sampling probability of any trajectory $(s_0, o_1, s_1, ..., o_t, s_t)$ is identical for the constructed HMM and the MDP.
\end{proposition}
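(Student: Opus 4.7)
The plan is to verify the claim by a direct two-step computation: expand the joint HMM probability of a trajectory that ``completes'' the given MDP trajectory, then marginalize the extra variable and match against the MDP law. To set this up, I would first fix the natural correspondence between a length-$t$ MDP trajectory $(s_0, o_1, s_1, \ldots, o_t, s_t)$ and the family of HMM trajectories of the form
\[
\big( (s_0, o_1),\, o_1,\, (s_1, o_2),\, o_2,\, \ldots,\, (s_{t-1}, o_t),\, o_t,\, (s_t, o_{t+1}),\, o_{t+1} \big)
\]
indexed by the free choice of a trailing action $o_{t+1} \in [m]$. The intuition is that the second coordinate of each HMM state encodes the ``next action'' to be taken in the MDP, and the emission rule $\dbO(o' \mid (s,o)) = \bm{1}[o' = o]$ ensures this action is the very next HMM observation. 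I also take the HMM's initial distribution to be $\mu((s_0, o_1)) = \Pr[s_0]/m$, matching a uniformly random first action in the MDP.

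Next, I would expand the HMM probability of one such fully-specified trajectory using the Markov factorization,
\[
\mu((s_0, o_1)) \cdot \prod_{k=0}^{t-1} \dbP\!\left((s_{k+1}, o_{k+2}) \mid (s_k, o_{k+1})\right) \cdot \prod_{k=0}^{t} \dbO\!\left(o_{k+1} \mid (s_k, o_{k+1})\right).
\]
Every emission factor equals $1$ by the deterministic emission rule, while each transition factor equals $P_{o_{k+1}}(s_{k+1}, s_k)/m$ by definition of $\dbP$. The expression therefore collapses to $\mu((s_0, o_1)) \cdot m^{-t} \prod_{k=1}^{t} P_{o_k}(s_k, s_{k-1})$, and notably does not depend on $o_{t+1}$.

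Finally, I would marginalize over the trailing action $o_{t+1}$. Because neither the last transition factor $\dbP((s_t, o_{t+1}) \mid (s_{t-1}, o_t)) = P_{o_t}(s_t, s_{t-1})/m$ nor the last emission $\dbO(o_{t+1} \mid (s_t, o_{t+1})) = 1$ actually depends on $o_{t+1}$, summing over its $m$ values simply multiplies the expression by $m$. Substituting $\mu((s_0, o_1)) = \Pr[s_0]/m$ then yields $\Pr[s_0] \cdot m^{-t} \prod_{k=1}^{t} P_{o_k}(s_k, s_{k-1})$, which is exactly the MDP trajectory probability obtained by sampling each $o_k$ uniformly from $[m]$ and applying the kernels $P_{o_k}$.

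The main obstacle is bookkeeping rather than mathematical content: carefully tracking the off-by-one shift between HMM time indices and MDP action indices (the $k$-th HMM state carries the $(k{+}1)$-th MDP action in its second coordinate) and correctly identifying which factors depend on $o_{t+1}$ before marginalization. Once the correspondence and the initial distribution are fixed, the equality reduces to a one-line telescoping of deterministic emissions and uniform action factors, and the same argument specializes verbatim to give Proposition~\ref{prop:equivalence_cycliceasy_cyclichmm} by taking $P_o$ to be the permutation matrix $A_o$.
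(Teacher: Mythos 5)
Your proof is correct and follows essentially the same route as the paper's: both reduce to the observation that each HMM transition factor $P_{o}(s',s)/m$ splits into a uniform $1/m$ for the emitted action and the MDP kernel for the physical state, with the paper packaging this as an induction on the one-step conditional distributions and you as a single product-and-marginalize computation of the joint trajectory law. If anything, your version is slightly more complete, since it makes explicit two points the paper leaves implicit: the initial distribution $\mu((s_0,o_1)) = \Pr[s_0]/m$ over the hidden second coordinate, and the marginalization over the trailing action carried by the final HMM state.
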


\begin{proof}
Given any $(s_0, o_1, ..., o_t, s_t)$, the probability that next observation $o_{t+1}$ equals $o$ in the HMM is $$\sum_{s'} \frac{P_{o_t}(s', s_t)}{m} = \frac{1}{ m}.$$ Therefore, the next observation distribution is a uniform distribution as in the automaton. On the other hand, the next state $s_{t+1}$ follows the distribution $P_{o_t}(\cdot, s_t)$ according to the construction. Therefore, these two models are equivalent for any $t$ by induction.
\end{proof}

Since a deterministic automaton is definitely an HMM, the equivalence between the HMM and automaton is proved throught Proposition \ref{prop:equivalence_cycliceasy_cyclichmm}.

\subsection{Comparisons between this work and \citet{liu2022transformers}}


The key difference is the introduction of uncertainty in the HMMs constructed in this paper. The uncertainty is derived from at least two sources: the first one is the uncertainty of hidden states given an observation sequence (e.g., the \HMM, \LDS and \CR model) due to stochastic state transition and observation emission, the second one is the uncertainty of the observations in the training data in next-observation prediction task. As a result, the standard HMM \CH is very hard to fit for Transformers, but a standard automaton can be fitted by a shortcut Transformer.

\subsection{Motivations for the Constructed HMMs}
\label{appendix:motivation}

We also dicuss our motivation for studying the HMMs and why we construct the 6 HMMs in the main text.

\paragraph{Why HMM models?} First, the investigation into the learnability of Transformers on various HMMs holds intrinsic value due to the universality of HMM models. This is pointed out in the introduction as the HMMs serve as useful tools for a wide range of practical problems such as part-of-speech \citep{kupiec1992robust} and named-entity recognition \citep{zhou2002named} in NLP, time-series forecasting \citep{zhang2019high}.

Furthermore, many real-world problems in control systems and reinforcement learning can be abstracted into HMMs as their simplest instances. Understanding the capabilities and limitations of Transformers in learning these models provides crucial insights that extend beyond HMMs themselves. For instance, the partially observable Markov decision process (POMDP) model, which extends HMMs by incorporating actions at each step, is a cornerstone in reinforcement learning. By investigating how Transformers perform on HMMs, we pave the way for understanding their efficacy in tackling more complex problems like POMDPs (since the HMMs are special cases of POMDPs). This is an important problem given the abundance of research efforts aimed at devising efficient reinforcement learning algorithms for learning POMDPs and their applications in various domains (see, e..g, \citet{nguyen2020deep} for a survey of methods, and \citet{cassandra1998survey} for a survey of applications).

\paragraph{Why constructing the 6 HMMs?} 

The six HMMs can be divided into two types: random instances and structured instances. The HMM and LDS are random instances chosen as they represent a natural starting point for exploring Transformers' learning capabilities on HMMs. Notably, the successful learning of these random instances by constant-layer Transformers suggests that HMMs lacking specific structures are relatively easy for Transformers to learn. This observation inspired us to design more challenging instances to better understand the limitations and capabilities of Transformers. We provide two motivations for studying the remaining 4 structured HMMs.

For one thing, we found that the hard instances require the Transformers to consider all previous tokens to predict the next token, instead of only checking the latest ones as in the random instances (i.e., the hard instances have long credit assignment length). This motivates us to consider the CyclicHMM model (and its variants) which has a long credit assignment length.

For another, there are specific real problems inspiring us to design these strucutred HMMs. The MatMul model is motivated from the belief state inference task, and the only difference is the absence of a normalization step compared with standard belief state inference. The cyclicHMM model is motivated from a specific semiautomaton called modular addition \citep{nanda2023progress}. Given a sequence of numbers, modular addition task requires the model to add these numbers then module a constant. The structured nature of these HMMs may also coincide with real-world problem instances, thereby facilitating a more comprehensive evaluation of Transformers' efficacy in handling complex sequential data.


\section{Details of Experiments}
\label{appendix:experiments}

\paragraph{Training.} The training data consists of $N_{\operatorname{train}} = 5 \times 10^6$ trajectories rolled out from the same random instance for each task. We change $N_{\operatorname{train}}$ to $10^6$ for the simplest task \HMM, \LDS, and the block CoT training to save the computation time. The input data is the observation sequence $(o_0, ..., o_T)$ of length $T + 1$ ($o_0$ is a placeholder symbol) concatenated by a 3-dim positional encoding $(\sin(\pi t / 4T), \cos(\pi t / 4T), 1)$ at position $t$. The target data is the belief state of length $T + 1$ for belief state filtering tasks or next observation sequence of length $T$ for next-observation prediction tasks. In epoch $l$, the training loss is computed as 
\begin{align}
 \frac{1}{T_l} \cdot \sum_{t=0}^{T_l} \caL\left(\hat{x}_t, x_t\right),
\end{align}
where $\hat{x}_t$ is the output of the neural network given $(o_0, ..., o_t)$, $x_t$ is the training label, $\caL$ is the loss function, and $T_l \leq T$ is the training sequence length at epoch $l$. $\caL$ is chosen as the MSE loss for \MM and \LDS, and chosen as the cross entropy loss for other tasks. The training sequence length $T_l$ is $T$ if curriculum training is disabled, and set according to the curriculum stage if it is toggled on.

\paragraph{Tasks.}

Although the combination of belief state filtering problem and observation prediction problem with each HMM is possible, many of them are trivial. For example, predicting the next observation distribution in \MM, \CE, and \CR is trivial since it follows a uniform distribution. Generally speaking, predicting the belief state is harder than predicting the next observation distribution since the latter is a linear mapping of the former. Importantly, we assume the access to belief states (should be hidden) at each step as training labels in the belief state filtering problem, but the next-observation prediction problem does not have access to the hidden belief states. Therefore, the belief state filtering problem in the paper provide much more hidden information during training than observation prediction. For the \CH model, we use \CE to construct the first stage of \CH instead of \CR to reduce the number of states. The instance used for training and evaluation is randomly generated but keep fixed for all the neural networks. The state dimension (or number of states if it is discrete) and observation dimension (or number of observations if it is discrete) are both 5 for \CE, \MM, \LDS, \HMM model, and constructed accordingly for \CR and \CH model. The initial state is the first state for all tasks. We choose the parameter $\alpha = 1/T$ for \CH and $\varepsilon = 0.01$ for \CR model.

\paragraph{Evaluation.}

The evaluation stage is at the end of each epoch. $E = 256$ trajectories are rolled out freshly, and the neural networks are fed in the sequences to do the prediction. The evaluation loss is at step $t$ is $\|\hat{x}_t - x_t\|_p / (3 - p)$ where $p = 2$ for \MM and \LDS, and $p=1$ for others (i.e., total variation distance). The reported evaluation loss is the average over $E$ trajectories.

\paragraph{Curriculum Training.}

We employ a double schedule for curriculum training for Transformers (no curriculum training for RNNs). For an $l$-layer Transformer, we choose $8 - l$ curriculum stages and set the length of each stage to be $\floor{100 / (8 - l)}$ epochs. The training sequence length of the first stage is $2^l$, and doubled immediately after $\floor{100 / (8 - l)}$ epochs.

\paragraph{Block Chain-of-Thought.}

The $b$ block CoT training feeds the output of the Transformer back into it every $b$ steps. For the belief state filtering tasks, the output is the predicted belief state ate current step. Therefore, the necessary computation depth is reduced from $T$ to $b$ if the prediction is approximately correct. For the observation prediction task, the output is the distribution of the next observation conditioned on current observation sequence $\dbE[o_{t+1} \mid o_0, ..., o_t]$. This conditional distribution may be highly correlated with the hidden belief state $\bm{b}_t$ (such as the prediction stage of \CH), or be irrelevant with the hidden belief state (such as the \CR model). A measurement of such ``correlation" is called the observability of the HMM \citep{golowich2022planning}. The block CoT also works for observation prediction task with good observability (i.e., the correlation is strong), since the hidden state can be (approximately) inferred from the observation distribution. Since the observability of the first stage of \CH model is very bad, the \CH model cannot be resolved by block CoT training.

Theoretically speaking, the value of $b$ can be determined by the fit length of the Transformers, if the error rate is ignored. In reality, the prediction error at early steps will accumulate to later steps and so we have to choose a smaller value than $\epsilon$-fit length if we hope to reduce the evalution loss below $\epsilon$. We choose the half of the $\epsilon$-fit length in our experiments.

The averaged training time of different block sizes on \MM and \CE are listed in Table \ref{tab:time_bCoT}. The time cost of $b$-block CoT is approximately $1/b$ of that of vanilla CoT.


\begin{table}[ht]
    \centering
    \begin{tabular}{c|c|c|c|c}
    \hline
     & block length 1 & block length 8 & block length 12 & no block CoT \\
     \hline
    \MM & 4838 & 608 & 390 & 94 \\
    \hline
    \CE & 4828 & 620 & 393 & 94 \\ 
    \hline
    \end{tabular}
    \caption{Training time (in seconds) per epoch of block CoT for different tasks of length 60 on 4 GPUs. We choose the 3-layer Transformer for both tasks.}
    \label{tab:time_bCoT}
\end{table}

\subsection{Hyperparameters and Packages}

The RNN models is employed from pytorch directly with embedding dimension 64, and number of layers 1. The Transformer model has embedding dimension 512, number of heads 8, MLP layer 2 with GeLU activation, and drop out rate 0.1. The optimizer is AdamW \citep{loshchilov2017decoupled} with default parameters in pytorch. The initial learning rate for both models are \texttt{1e-3}, and decays by a factor of $0.5$ every 20 epochs. We adopt the learning rate warmup procedure \citep{xiong2020layer} to warmup the learning rate from \texttt{1e-7} to \texttt{1e-3} with a linear increasing for 4000 optimization steps. The batch size is chosen from 256, 512, or 1024 depending on the layers of the Transformers, and 256 for RNN.

\subsection{Benefits of Curriculum Training}
\label{appendix:benefits_curriculum}

Besides saving lost of training time, curriculum training also helps warm-up the model, so as to accelerate the training as well as the final performance. Figure \ref{fig:double_vs_none} shows the convergence speed and final value of fit length would be better with curriculum training. 

\begin{figure}[t]
    \centering
    \includegraphics[width = \textwidth]{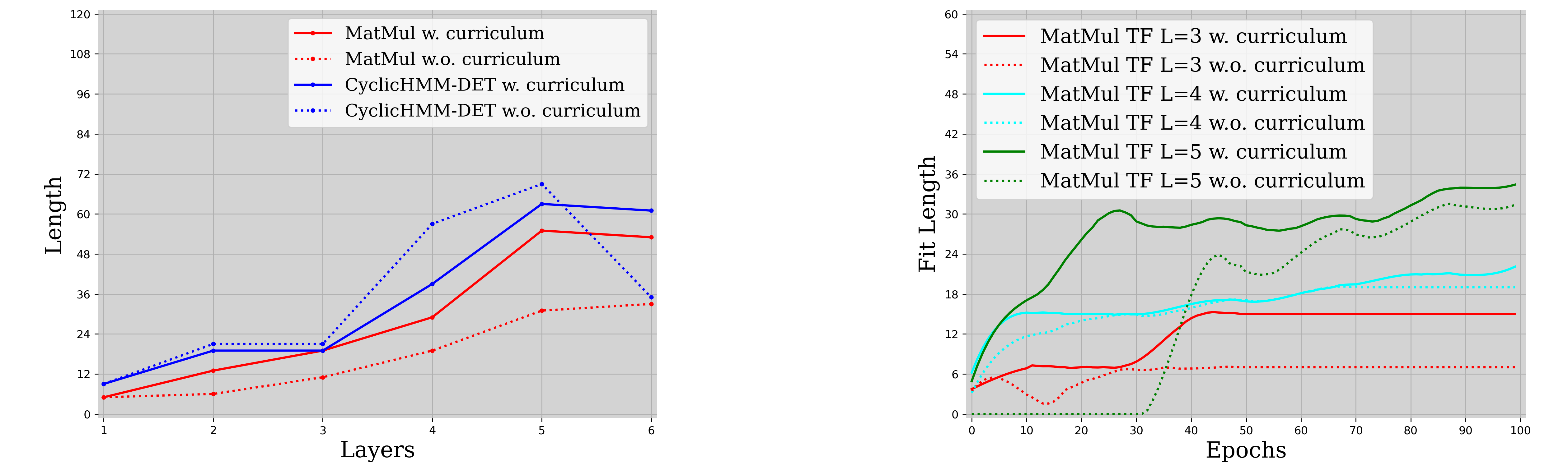}
    \caption{The benefits of curriculum training. Left: The $0.1$-fit length of Transformers with different depth, reported as the best of 2 experiments with different seed. The fit length of curriculum training is comparable with vanilla training in \CE in general, and better than vanilla training in \texttt{MatMul}. Right: The convergence speed for different Transformers on \MM model. The convergence of $0.1$-fit length for curriculum training is consistently faster than vanilla training.}
    \label{fig:double_vs_none}
\end{figure} 

\subsection{The Possibility of Overfitting}
\label{appendix:fitting_train_set}

In order to verify whether the large Transformers overfit the training data, we listed the training loss at the end of the last epoch of Transformers and RNNs in Table \ref{table:training_loss} (``w. CT" means with curriculum training). From the table, we know the fitting ability of Transformers on the \MM, \CE, and \CR model is much worse than RNNs due to the sequential nature of HMMs. For the fast-mixing models \HMM and \LDS, the performance of the RNNs and Transformers are comparable since the predicting these models only require a short memory.

\begin{table}[tb]
    \centering
    \begin{tabular}{c|c|c}
    \hline
    Task & Model & Final training loss \\
    \hline
      \MM & RNN &  $2.3475 \times 10^{-6}$ \\
          & TF ($L=5$) & 0.10927 \\
          & TF ($L=5$), w. CT & 0.09885 \\
          & TF ($L=6$) & 0.11073 \\
          & TF ($L=6$), w. CT & 0.10999 \\
      \hline
        \CE & RNN &  $2.2662 \times 10^{-7}$ \\
          & TF ($L=5$) & 0.67059 \\
          & TF ($L=5$), w. CT & 0.76278 \\
          & TF ($L=6$) & 0.68182 \\
          & TF ($L=6$), w. CT & 0.77835 \\
      \hline
        \CR & RNN &  $3.7769 \times 10^{-6}$ \\
          & TF ($L=5$) & 1.24707 \\
          & TF ($L=6$) & 1.13770 \\
      \hline
        \HMM & RNN &  1.40139 \\
          & TF ($L=2$) & 1.40107 \\
      \hline
        \LDS & RNN &  2.58242 \\
          & TF ($L=2$) & 2.61545 \\
      \hline
    \end{tabular}
    \caption{The final training loss of different models on different tasks.}
    \label{table:training_loss}
\end{table}

\subsection{The Failure Case of 7-Layer Transformers}
\label{appendix:failure_7tf}

We conjecture the failure of a 7-layer Transformer to fit the \MM task primarily due to some optimization issues, and it requires a longer training process to tackle the issue. It can be observed from the experiments that the training loss converges much slower for \MM than other tasks from Table \ref{tab:train_loss_7tf}. We have tried various tricks such as different curriculum scheduling, smaller training dataset, different warmup epochs, different learning rate, but they all fail to address the issue.  

\begin{table}[ht]
    \centering
    \begin{tabular}{c|c|c|c|c|c|c|c|c|c}
    \hline
    Epoch & 60 & 70 & 80 & 90 & 100 & 125 & 150 & 175 & 200 \\
    \hline
    \CE & 0.014 & 0.006 & 0.004 & 0.005 & 0.003 & \texttt{N/A} & \texttt{N/A}& \texttt{N/A}& \texttt{N/A}\\
    \hline
    \CR & 1.506 & 1.503 & 1.503 & 1.502 & 1.503 & \texttt{N/A}& \texttt{N/A}& \texttt{N/A}& \texttt{N/A}\\
    \hline
    \MM & 0.1069 & 0.0999 & 0.0969 & 0.0934 & 0.0916 & 0.0894 & 0.0885 & 0.0875 & 0.0870 \\
    \hline
    \end{tabular}
    \caption{The training loss of 3 tasks of 7-layer Transformers. The total epochs for \MM is increased to 200. The training for \CE and \CR have already converged at the end of epoch 100 (note that the loss for \CR is larger due to the randomness and we implement an augmented HMM constructed in Appendix \ref{appendix:equivalence_auto_hmm}). The training for \MM converges much slower than the other two tasks.}
    \label{tab:train_loss_7tf}
\end{table}


\subsection{Implementations and Resources}

The RNN and Transformer are implemented with \texttt{Pytorch} from scratch. Each of the experiments are trained on 4 \texttt{NVIDIA GeForce RTX 4090} GPUs for 2-20 hours, where a single worker runs on each GPU.

\section{Technical Lemmas}
\label{appendix:technical_lemma}

\subsection{Useful Lemmas for MLP}

First of all, it is ensured that an MLP with GeLU activation can approximate a scalar product. 

\begin{lemma}[Lemma C.1 of \citet{feng2023towards}]
\label{lem:mlp_scalar_product}
Let $f: \dbR^2 \to \dbR$ be a two-layer MLP with GeLU activation with 4 hidden neurons at the second layer. For any $\epsilon > 0$ and $M > 0$, there exist a set parameters of $f$ such that $|f(a, b) - ab| \leq \epsilon$ for all $a, b \in [-M, M]$. The $\ell_\infty$ norm of the parameters are bounded by $ O(\operatorname{poly}(M, 1 / \epsilon))$.
\end{lemma}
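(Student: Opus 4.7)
The plan is to reduce the problem of approximating the product $ab$ to approximating the single-variable function $x \mapsto x^2$, using the classical polarization identity
\begin{equation*}
ab = \frac{1}{4}\bigl[(a+b)^2 - (a-b)^2\bigr].
\end{equation*}
On $[-M,M]^2$ the two arguments $a+b$ and $a-b$ both lie in $[-2M, 2M]$, so it suffices to build, out of two GeLU hidden units each, an approximation of $x \mapsto x^2$ on $[-2M, 2M]$; using one pair on $a+b$ and one pair on $a-b$ then consumes exactly the four hidden neurons allowed.

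For the squaring subroutine I will exploit the second-order Taylor expansion of GeLU at the origin. Since $\mathrm{GeLU}(x) = x\Phi(x)$ has $\mathrm{GeLU}(0) = 0$, $\mathrm{GeLU}'(0) = 1/2$, and $\mathrm{GeLU}''(0) = 2\phi(0) = \sqrt{2/\pi} \neq 0$, the odd terms cancel in the symmetric sum:
\begin{equation*}
\mathrm{GeLU}(hx) + \mathrm{GeLU}(-hx) = \mathrm{GeLU}''(0)\,(hx)^2 + \frac{\mathrm{GeLU}^{(4)}(0)}{12}(hx)^4 + O\bigl((hx)^6\bigr).
\end{equation*}
Dividing by $h^2 \mathrm{GeLU}''(0)$ yields $x^2$ up to an error of order $h^2 x^4$. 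Choosing $h = \Theta(\sqrt{\epsilon}/M^2)$ then makes this error uniformly $O(\epsilon)$ on $|x|\le 2M$. The final MLP is
\begin{equation*}
f(a,b) = \tfrac{1}{4 h^2 \mathrm{GeLU}''(0)}\bigl[\mathrm{GeLU}(h(a+b)) + \mathrm{GeLU}(-h(a+b)) - \mathrm{GeLU}(h(a-b)) - \mathrm{GeLU}(-h(a-b))\bigr],
\end{equation*}
whose first-layer weights have magnitude $h = O(\sqrt{\epsilon}/M^2)$ and whose readout coefficients have magnitude $O(1/h^2) = O(M^4/\epsilon)$, both of which are polynomial in $M$ and $1/\epsilon$.

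The main obstacle I anticipate is making the Taylor-remainder bound fully quantitative and uniform over $|x|\le 2M$, rather than merely asymptotic. Concretely, I would need an explicit upper bound on $\sup_{|t|\le 2Mh}|\mathrm{GeLU}^{(4)}(t)|$ to control the remainder; this is routine because all derivatives of GeLU are of the form (polynomial)$\times \phi(x) + $ (polynomial in $\Phi(x), x$), and are hence bounded by absolute constants on all of $\mathbb{R}$. Once that bound is in hand, combining the four-neuron construction with the polarization identity and tracking constants gives $|f(a,b)-ab| \le \epsilon$ and the stated $\operatorname{poly}(M,1/\epsilon)$ bound on the $\ell_\infty$ norm of the parameters.
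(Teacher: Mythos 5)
Your construction is exactly the one the paper uses (quoting Feng et al.): writing $h = 1/\lambda$, your readout coefficient $\tfrac{1}{4h^2\,\mathrm{GeLU}''(0)} = \tfrac{\sqrt{2\pi}\lambda^2}{8}$ and your four hidden units coincide with the displayed formula in the paper's proof of Lemma \ref{lem:mlp_matrix_multiplication}, and the polarization-plus-even-part-Taylor argument with a uniform bound on $\mathrm{GeLU}^{(4)}$ is the standard justification. The proposal is correct and takes essentially the same approach, differing only in the exact polynomial choice of the scale parameter.
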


It is straightforward to show that a two-layer MLP can simulate a matrix multiplication with Lemma \ref{lem:mlp_scalar_product}.

\begin{lemma}
\label{lem:mlp_matrix_multiplication}
Given two matrices $A, B \in \dbR^{n \times n}$, let the vector $\operatorname{vec}(X) \in \dbR^{n^2}$ be the vectorization of matrix $X$. There exists a two-layer MLP $g: \dbR^{2n^2+1} \to \dbR^{n^2}$ with $4n^3$ hidden neurons and GeLU activation such that for any input vector $[\operatorname{vec}(A), \operatorname{vec}(B), 1] \in \dbR^{2n^2}$ with $\|A\|_{F} \leq M -n, \|B\|_F \leq M$, it holds that $\|g([\operatorname{vec}(A), \operatorname{vec}(B)]) - \operatorname{vec}((A - \bm{I})B)\|_\infty \leq \epsilon$. The $\ell_\infty$ norm of the parameters are bounded by $ O(\operatorname{poly}(M, n, 1 / \epsilon))$.
\end{lemma}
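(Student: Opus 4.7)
The plan is to reduce the matrix product to a collection of scalar products and invoke Lemma \ref{lem:mlp_scalar_product} in parallel. Writing out the $(i,j)$-entry,
\begin{equation*}
\bigl((A-\bm{I})B\bigr)_{ij} \;=\; \sum_{k=1}^{n} (A_{ik} - \delta_{ik})\, B_{kj},
\end{equation*}
we see that each output coordinate is a sum of $n$ scalar products whose factors are affine functions of the input $[\operatorname{vec}(A),\operatorname{vec}(B),1]$ (the $-\delta_{ik}$ shift is exactly what the constant $1$ coordinate is for). There are $n^2$ output coordinates, hence $n^3$ scalar products in total; allocating the $4$ hidden neurons promised by Lemma \ref{lem:mlp_scalar_product} to each gives precisely the claimed width $4n^3$.

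Concretely, I would construct $g$ as a single two-layer GeLU MLP by placing $n^3$ independent ``scalar-product gadgets'' in parallel in the hidden layer, indexed by $(i,j,k)$. For the gadget indexed by $(i,j,k)$, the first linear map reads off the two scalars $a_{ijk} := A_{ik} - \delta_{ik}$ and $b_{ijk} := B_{kj}$ from the input (both are just linear combinations of input coordinates, using the constant $1$ coordinate to supply the $-\delta_{ik}$ term), and then uses the $4$-neuron construction from Lemma \ref{lem:mlp_scalar_product} instantiated with range $M$ and target accuracy $\epsilon/n$. The output linear layer then forms, for each $(i,j)$, the sum over $k$ of the $n$ gadget outputs to produce the coordinate $((A-\bm{I})B)_{ij}$.

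To control the input range of each gadget, observe that $\|A\|_F \leq M-n$ implies $|A_{ik}| \leq M-n$, so $|a_{ijk}| \leq M-n+1 \leq M$, and $|b_{ijk}| \leq \|B\|_F \leq M$; hence Lemma \ref{lem:mlp_scalar_product} applies with the bound $M$. By the triangle inequality, the error in each output coordinate is at most $n \cdot (\epsilon/n) = \epsilon$, giving the claimed $\ell_\infty$ bound. The parameter norm is dominated by the parameters inside each gadget, which Lemma \ref{lem:mlp_scalar_product} controls by $O(\operatorname{poly}(M, n/\epsilon))$, while the surrounding linear maps have entries in $\{-1, 0, 1\}$.

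No real obstacle arises; the only mildly delicate points are (i) making sure the two affine pre-processings for $a_{ijk}$ and $b_{ijk}$ really can be absorbed into the first linear layer of the MLP (they can, since the constant $1$ is part of the input), and (ii) budgeting the per-gadget accuracy as $\epsilon/n$ rather than $\epsilon$ so that the final triangle-inequality bound is tight. Both are bookkeeping rather than conceptual hurdles, so the proof is essentially a direct packaging of Lemma \ref{lem:mlp_scalar_product} applied $n^3$ times in parallel.
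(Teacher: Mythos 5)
Your proposal is correct and follows essentially the same route as the paper: decompose $((A-\bm{I})B)_{ij}$ into $n^3$ scalar products of affine functions of the input, implement each with the $4$-neuron GeLU gadget of Lemma \ref{lem:mlp_scalar_product} at accuracy $\epsilon/n$, and sum over $k$ in the output layer. Your explicit check that $|A_{ik}-\delta_{ik}|\leq M$ (using $\|A\|_F\leq M-n$) is a small point the paper leaves implicit, but the argument is otherwise identical.
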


\begin{proof}
According to the matrix multiplication formula, it suffices to compute all the scalar product $(A - \bm{I})_{ik} B_{kj}$ for $1 \leq i, j, k, \leq n$ at the hidden layer, the total number of which is $n^3$. The output layer can be used to gather these products to the desired output $\operatorname{vec}((A - \bm{I})B)$. For a positive number $\epsilon' > 0$, the constructed MLP $f$ in Lemma \ref{lem:mlp_scalar_product} is 
\begin{align}
\label{eqn:mlp_scalar_product_construction}
f(a, b)=\frac{\sqrt{2 \pi} \lambda^2}{8}\left(\sigma\left(\frac{a+b}{\lambda}\right)+\sigma\left(\frac{-a-b}{\lambda}\right)-\sigma\left(\frac{a-b}{\lambda}\right)-\sigma\left(\frac{-a+b}{\lambda}\right)\right),
\end{align}
where $0 < \lambda \lesssim M^3 / \epsilon'$, and $\sigma$ is the GeLU activation.

Therefore, we can compute and store 
\begin{align*}
u_{ijk} \defeq \left(\sigma\left(\frac{(A - \bm{I})_{ik}+B_{kj}}{\lambda'}\right), \sigma\left(\frac{-(A - \bm{I})_{ik}-B_{kj}}{\lambda'}\right), \sigma\left(\frac{(A - \bm{I})_{ik}-B_{kj}}{\lambda'}\right), \sigma\left(\frac{-(A - \bm{I})_{ik}-B_{kj}}{\lambda'}\right)\right) \in \dbR^4
\end{align*}
using the $4n^3$ hidden neurons of $g$ for all $i, j, k$.  

By choosing an appropriate $\lambda' = O(\operatorname{poly}(M, 1/\epsilon'))$, a single neuron at the output layer of $g$ approximate $\sum_{k=1}^n (A - \bm{I})_{ik} B_{kj}$ with $\ell_\infty$ error $n\epsilon'$ by a linear combination of all entries of $\{u_{ijk}\}_{k=1}^n$. It implies $\|g([\operatorname{vec}(A), \operatorname{vec}(B)]) - \operatorname{vec}((A - \bm{I})B)\|_\infty \leq n\epsilon'$. The theorem is proved by choosing $\epsilon' = \epsilon / n$.
\end{proof}

The next lemma shows that two-layer MLPs with GeLU activation and ReLU activation are equivalent.

\begin{lemma}[Lemma C.2 of \citet{feng2023towards}]
\label{lem:gelu_relu_equivalent}
Given $\epsilon > 0$, for any two-layer MLP $f$ with GeLU activation with parameter scale bounded by $M$, there exists a two-layer MLP $g$ with ReLU activation of the same size such that for any $x$ it holds that $\|f(x) - g(x)\|_\infty \leq \epsilon$. The parameters of $g$ is bounded by $O(\poly(M, 1 / \epsilon))$.
\end{lemma}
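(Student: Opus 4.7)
The plan is to build $g$ by approximating each hidden GeLU unit of $f$ with a short ReLU subcircuit and then stitching all $h$ subcircuits in parallel into a single two-layer ReLU MLP. The key activation-level observation is that the discrepancy $\delta(z) \coloneqq \sigma_{\mathrm{GeLU}}(z) - \sigma_{\mathrm{ReLU}}(z) = z\Phi(z) - \max\{z,0\}$ is smooth, uniformly bounded, and decays super-exponentially: a Mills-ratio estimate gives $|\delta(z)| \leq (1/\sqrt{2\pi})\exp(-z^2/2)$, so $|\delta(z)| \leq \epsilon/(hM)$ already holds for $|z| \geq R$ with $R = O(\sqrt{\log(hM/\epsilon)})$. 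Thus ReLU already ``matches'' GeLU in both tails, and the problem reduces to fixing up a bounded slab around the origin.

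First I would approximate $\delta$ on $[-R, R]$ by a continuous, compactly supported piecewise-linear function $\tilde\delta$ with uniformly spaced knots; since $\delta$ has bounded second derivative on this interval, standard linear-interpolation bounds yield uniform error $\epsilon/(hM)$ using $k = \mathrm{poly}(M, 1/\epsilon)$ knots. Any compactly supported piecewise-linear function admits a ReLU expansion $\tilde\delta(z) = \sum_{j=1}^{O(k)} \alpha_j\,\sigma_{\mathrm{ReLU}}(z - \tau_j)$. I then set
\[
g(x) = \sum_{i=1}^h w_{2,i}\Bigl[\sigma_{\mathrm{ReLU}}(z_i(x)) + \tilde\delta(z_i(x))\Bigr] + b_2, \qquad z_i(x) \coloneqq \langle w_{1,i}, x\rangle + b_{1,i}.
\]
Substituting the ReLU expansion of $\tilde\delta$ reveals $g$ as a two-layer ReLU MLP of the standard linear$\to$ReLU$\to$linear shape. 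Combining the on-slab interpolation error, the off-slab exponential tail, and the bound $|w_{2,i}| \leq M$ gives $\|f(x) - g(x)\|_\infty \leq \epsilon$ uniformly on $\mathbb{R}^n$; all weights and biases of $g$ are $\mathrm{poly}(M, 1/\epsilon)$ in magnitude because $R$ is polylogarithmic in $M/\epsilon$ and the interpolation coefficients $\alpha_j$ inherit the boundedness of $\delta$ and $\delta'$.

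The main obstacle is the clause ``of the same size.'' Read literally as identical hidden width, the lemma is not true: $\sup_{z \in \mathbb{R}} |\sigma_{\mathrm{GeLU}}(z) - \sigma_{\mathrm{ReLU}}(z)|$ is a strictly positive universal constant (attained near the kink of ReLU), so no one-to-one substitution of a single ReLU for each GeLU can drive the approximation error below this constant, let alone to an arbitrary $\epsilon$. The construction above preserves the depth and the linear$\to$activation$\to$linear architecture while inflating the hidden width by a $\mathrm{poly}(1/\epsilon)$ factor, which is the sense in which it matches the ``size'' of $f$; refining the proof to also match the hidden width exactly would require a genuinely new idea (e.g.\ exploiting cancellations across the $h$ units simultaneously), and I would flag this as the place where the argument must be most careful when the lemma is invoked by the Transformer-expressiveness proofs in Appendix~\ref{appendix:full_proof_tf_expressiveness}.
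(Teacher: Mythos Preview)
The paper does not prove this lemma at all: it is quoted verbatim as ``Lemma~C.2 of \citet{feng2023towards}'' and then invoked twice in Appendix~\ref{appendix:full_proof_tf_expressiveness}. So there is no in-paper proof to compare against. What is worth flagging, and what you in fact discovered, is that the lemma is transcribed with the two activations swapped. Both places where the paper uses it (``we can use a GeLU network with the same size to simulate [the ReLU one]'' and ``as long as we use our GeLU MLP to simulate the ReLU MLP'') require the direction \emph{ReLU $\Rightarrow$ GeLU}, not the direction written in the statement. Your diagnosis that the literal statement is false under the ``same hidden width'' reading is therefore exactly right, and your inflated-width construction is the correct repair if one insists on that direction.

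In the intended direction the proof is a one-line scaling trick, with no width inflation and uniform control over all of $\mathbb{R}^n$. Using the Mills-ratio bound $1-\Phi(t)\le \phi(t)/t$ one checks that
\[
\sup_{z\in\mathbb{R}}\Bigl|\tfrac{1}{\lambda}\,\sigma_{\mathrm{GeLU}}(\lambda z)-\sigma_{\mathrm{ReLU}}(z)\Bigr|\le \frac{1}{\lambda\sqrt{2\pi}},
\]
so for a ReLU MLP $f(x)=W_2\,\sigma_{\mathrm{ReLU}}(W_1x+b_1)+b_2$ with hidden width $h$ and $\|W_2\|_\infty\le M$, the GeLU MLP $g(x)=(W_2/\lambda)\,\sigma_{\mathrm{GeLU}}\bigl(\lambda W_1x+\lambda b_1\bigr)+b_2$ of identical size satisfies $\|f-g\|_\infty\le hM/(\lambda\sqrt{2\pi})$; taking $\lambda=\Theta(hM/\epsilon)$ gives the claim with parameters bounded by $O(\poly(M,1/\epsilon))$. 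This is the argument the paper is implicitly relying on, and it is what you should slot in where the Transformer construction replaces the sign-flip ReLU subnetwork by a GeLU one.
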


\subsection{Softmax to Approximate Hard Maximum}

The following lemma quantifies the error of using softmax to approximate hard max in an $n$-dimensional vector.

\begin{lemma}[Lemma 4 of \cite{liu2022transformers}]
\label{lem:softmax}
Suppose $z \in \dbR^{n}$, the $\operatorname{softmax}: \dbR^n \to \dbR^n$ function transforms $z$ into
\begin{align}
\label{eqn:softmax}
\operatorname{softmax}(z)_i = \frac{e^{z_i}}{\sum_{j=1}^n e^{z_j}}.
\end{align}
Let $t^* \defeq \argmax_{t} z_t$, and suppose for any $t \neq t^*$ it holds that $z_{t} \leq z_{t^*} - \gamma$, then
\begin{align}
\left\|\operatorname{softmax}(z) - e_{t^*}\right\|_1 \leq 2n e^{-\gamma}.
\end{align}
\end{lemma}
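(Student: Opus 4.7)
The plan is direct computation exploiting the fact that $\operatorname{softmax}(z)$ is a probability distribution. Write $s = \operatorname{softmax}(z)$. Since $s_i \geq 0$ and $\sum_i s_i = 1$, and since $e_{t^*}$ has a single nonzero entry at coordinate $t^*$, the $\ell_1$ distance splits as
\[
\|s - e_{t^*}\|_1 = (1 - s_{t^*}) + \sum_{t \neq t^*} s_t = 2(1 - s_{t^*}),
\]
where the last equality uses $\sum_{t \neq t^*} s_t = 1 - s_{t^*}$. Thus the whole problem reduces to upper bounding $1 - s_{t^*}$.

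Next, I would factor $e^{z_{t^*}}$ out of both the numerator and denominator in the definition of $s_{t^*}$ to obtain
\[
s_{t^*} \;=\; \frac{1}{1 + \sum_{t \neq t^*} e^{z_t - z_{t^*}}}.
\]
The hypothesis $z_t \leq z_{t^*} - \gamma$ for $t \neq t^*$ then yields $\sum_{t \neq t^*} e^{z_t - z_{t^*}} \leq (n-1) e^{-\gamma}$. Plugging this into the expression for $s_{t^*}$ and using the elementary inequality $\frac{x}{1+x} \leq x$ for $x \geq 0$ with $x = (n-1)e^{-\gamma}$ gives
\[
1 - s_{t^*} \;=\; \frac{\sum_{t \neq t^*} e^{z_t - z_{t^*}}}{1 + \sum_{t \neq t^*} e^{z_t - z_{t^*}}} \;\leq\; (n-1)e^{-\gamma}.
\]
Combining this with the first display yields $\|s - e_{t^*}\|_1 \leq 2(n-1)e^{-\gamma} \leq 2n e^{-\gamma}$, which is the stated bound.

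There is essentially no hard step here: the argument is two lines of algebra. The only point worth emphasizing is the opening simplification that the $\ell_1$ distance between a probability vector and a one-hot vector equals twice the complement of the corresponding mass. Once that observation is made, the gap condition directly controls the denominator of $s_{t^*}$ and no further machinery is required.
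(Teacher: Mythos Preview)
Your argument is correct and complete. The paper itself does not supply a proof of this lemma; it simply quotes the statement as Lemma~4 of \cite{liu2022transformers}, so there is nothing in the paper to compare against beyond noting that your two-line computation is exactly the standard derivation of this bound.
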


\subsection{Sinusoidal Positional Encoding}

\begin{lemma}
\label{lem:sin_pe}
For any $0 \leq \alpha < \alpha + \pi / 4T \leq \beta < \pi /4$, it holds that 
\begin{align}
\cos(\alpha) - \cos(\beta) \geq \frac{\pi^2}{32T^2}.
\end{align}
\end{lemma}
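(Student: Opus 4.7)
The plan is to convert the difference of cosines into a product of sines via the identity
\begin{equation*}
\cos(\alpha) - \cos(\beta) = 2 \sin\left(\tfrac{\alpha+\beta}{2}\right) \sin\left(\tfrac{\beta-\alpha}{2}\right),
\end{equation*}
and then lower-bound each sine factor by a linear function of its argument. First I would locate both arguments inside a range where $\sin$ is well-controlled: the hypotheses give $\tfrac{\alpha+\beta}{2} \leq \beta < \pi/4$ and $\tfrac{\beta-\alpha}{2} \leq \beta/2 < \pi/8$, while on the lower side, using $\alpha \geq 0$ together with $\beta - \alpha \geq \pi/(4T)$, both $\tfrac{\alpha+\beta}{2} \geq \tfrac{\beta-\alpha}{2}$ and $\tfrac{\beta-\alpha}{2}$ are at least $\pi/(8T)$. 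Hence both arguments lie in $[\pi/(8T), \pi/4] \subset (0, \pi/2)$.

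Next I would apply the standard chord bound $\sin(x) \geq \tfrac{2}{\pi} x$ on $[0, \pi/2]$, which follows from the concavity of $\sin$ on that interval together with $\sin(0) = 0$ and $\sin(\pi/2) = 1$. Substituting into the sum-to-product expression yields
\begin{equation*}
\cos(\alpha) - \cos(\beta) \;\geq\; 2 \cdot \tfrac{2}{\pi} \cdot \tfrac{\alpha+\beta}{2} \cdot \tfrac{2}{\pi} \cdot \tfrac{\beta-\alpha}{2} \;=\; \tfrac{2}{\pi^2} (\beta^2 - \alpha^2),
\end{equation*}
and since $\beta^2 - \alpha^2 = (\beta-\alpha)(\beta+\alpha) \geq (\beta-\alpha)^2 \geq \pi^2/(16 T^2)$ (using $\alpha \geq 0$ in the second inequality), this already delivers a lower bound of order $1/T^2$.

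The main technical subtlety is pinning down the exact constant $\pi^2/(32T^2)$: the crude chord bound above only produces $1/(8T^2)$, because the linearization $\sin(x) \geq (2/\pi) x$ under-estimates $\sin$ significantly when $x$ is near the upper end of $[0, \pi/2]$. To tighten it, I would replace the global chord with a sharper lower envelope valid on the smaller interval $[\pi/(8T), \pi/4]$, e.g.\ the Taylor lower bound $\sin(x) \geq x - x^3/6$ (which tends to $x$ as $x \to 0$), or the chord from $(0,0)$ to $(\pi/4, \sqrt{2}/2)$ giving $\sin(x) \geq (2\sqrt{2}/\pi) x$ on $[0, \pi/4]$, and then track how the resulting $1 - O(1/T^2)$ correction propagates through the product to match the stated constant. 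Once the sum-to-product reduction is in place, the rest is a purely computational constant-chasing step; I do not expect genuine difficulty beyond tracking these constants carefully.
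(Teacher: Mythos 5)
Your decomposition is sound and genuinely different from the paper's. The paper fixes $\delta=\beta-\alpha$, defines $f(x)=\cos(x)-\cos(x+\delta)$, checks $f'(x)=\sin(x+\delta)-\sin(x)\ge 0$ on the relevant range, and thereby reduces to the worst case $\alpha=0$, i.e.\ to lower-bounding $1-\cos\delta$ with $\delta\ge\pi/(4T)$; you instead use the sum-to-product identity and bound the two sine factors separately with the chord inequality. Both routes are legitimate, and at the extremal point they coincide, since $1-\cos\delta=2\sin^2(\delta/2)$ is exactly your product at $\alpha=0$. The paper's monotonicity step is slightly sharper in that it identifies the true worst case before estimating, whereas your separate chord bounds give up a factor of $\pi^2/4$ and land at $1/(8T^2)$.

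The one place your plan would not go through as written is the final ``constant-chasing to match $\pi^2/(32T^2)$'': that constant is not actually attainable. At $\alpha=0$, $\beta=\pi/(4T)$ the left side equals $2\sin^2(\pi/(8T))$, which is \emph{strictly less} than $\pi^2/(32T^2)$, so no sharpening of the sine lower bound can recover the stated constant; the best one can prove is $\bigl(1-O(1/T^2)\bigr)\cdot\pi^2/(32T^2)$, or uniformly $\bigl(1-\pi^2/192\bigr)\pi^2/(32T^2)$ on the given range. The paper's own proof has the same defect: it invokes ``$\cos x\le 1-x^2/2$,'' which is the Taylor inequality with the sign reversed (the correct direction is $\cos x\ge 1-x^2/2$, giving an upper rather than lower bound on $1-\cos\delta$). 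None of this matters downstream: the lemma is only used to certify a $\gamma^2\cdot\Omega(1/T^2)$ margin before the softmax in Lemma \ref{lem:softmax}, and your $1/(8T^2)$ (or the corrected constant) suffices after adjusting the prefactor in the choice of $\gamma$ in (\ref{eqn:tf_thm_proof_gamma_eta}). So: keep your argument, but state the conclusion with a slightly smaller constant rather than trying to hit $\pi^2/(32T^2)$ exactly.
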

\begin{proof}
Define $f_{\alpha, \beta}(x) \defeq \cos(x) - \cos(x + \beta - \alpha)$ for $0 \leq x \leq \pi / 4$, then $f_{\alpha, \beta}(0) \geq \pi^2 / 32T^2$ given that fact that $\cos x \leq 1 - x^2 / 2$.

Since $f^{\prime}_{\alpha, \beta}(x) = \sin(x + \beta - \alpha) - \sin(x) \geq 0$, it is proved that 
\begin{align}
f_{\alpha, \beta}(\alpha) = \cos(\alpha) - \cos(\beta) \geq f_{\alpha, \beta}(0) \geq \frac{\pi^2}{32T^2}.
\end{align}
\end{proof}

\subsection{Robust Matrix Multiplication}

\begin{lemma}
\label{lem:robust_matmul}
Given two matrices $A, B \in \dbR^{n \times n}$ and their approximation $\widehat{A}, \widehat{B} \in \dbR^{n\times n}$ such that 
\begin{align}
\|\operatorname{vec}\left(A - \widehat{A}\right)\|_\infty \leq \alpha, \|\operatorname{vec}\left(B- \widehat{B}\right)\|_\infty \leq \beta.
\end{align}
Then it holds that
\begin{align}
\|\operatorname{vec}\left(AB - \widehat{A}\widehat{B}\right)\|_\infty \leq \alpha \left\|\widehat{B}\right\|_1 + \beta \left\|A\right\|_\infty.
\end{align}
\end{lemma}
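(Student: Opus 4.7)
The plan is to use the standard telescoping decomposition for products, then bound each resulting term by Hölder's inequality applied entrywise. Specifically, I would write
\[
AB - \widehat{A}\widehat{B} \;=\; (A - \widehat{A})\widehat{B} \;+\; A(B - \widehat{B}),
\]
which is easily verified by expanding and cancelling the $A\widehat{B}$ term. It then suffices to produce an $\ell_\infty$ bound on the vectorization of each of the two summands.

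For the first summand, I would consider an arbitrary entry $((A - \widehat{A})\widehat{B})_{ij} = \sum_k (A - \widehat{A})_{ik} \widehat{B}_{kj}$. Since the entrywise assumption gives $|(A - \widehat{A})_{ik}| \leq \alpha$ for all $k$, pulling $\alpha$ outside yields the bound $\alpha \sum_k |\widehat{B}_{kj}|$, which is at most $\alpha \|\widehat{B}\|_1$ under the induced $\ell_1$ (maximum column sum) convention. For the second summand, the analogous computation on $(A(B - \widehat{B}))_{ij} = \sum_k A_{ik}(B - \widehat{B})_{kj}$ uses $|(B - \widehat{B})_{kj}| \leq \beta$ to obtain $\beta \sum_k |A_{ik}| \leq \beta \|A\|_\infty$ under the induced $\ell_\infty$ (maximum row sum) convention.

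Taking the maximum of the sum of these two bounds over all indices $(i,j)$ completes the argument, since $\|\operatorname{vec}(\cdot)\|_\infty$ is exactly the max-entry norm. There is no serious obstacle here; the proof is essentially the perturbation identity plus two Hölder-type estimates. The only small subtlety to flag is the norm convention: $\|\cdot\|_1$ and $\|\cdot\|_\infty$ must be interpreted as the operator norms (column and row sums respectively), which is consistent with how they are later invoked in the error-propagation analysis for the divide-and-conquer Transformer construction in Theorem~\ref{thm:tf_expressiveness}.
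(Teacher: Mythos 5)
Your proof is correct and is precisely the standard telescoping-plus-Hölder argument that the paper has in mind when it states ``The proof to this lemma is elementary'' and omits the details. Your remark on the norm conventions (maximum column sum for $\|\cdot\|_1$, maximum row sum for $\|\cdot\|_\infty$) is also the right reading, matching how the bound is instantiated in the error-propagation step of Theorem~\ref{thm:tf_expressiveness}.
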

The proof to this lemma is elementary.

\section{Proof of Main Theoretical Results}
\label{appendix:proof_theory}

\subsection{Proof of Theorem \ref{thm:rnn_expressiveness}}
\label{appendix:proof_rnn_expressiveness}

For any deterministic HMM $(\caS, \caO, \dbP, \dbO, S_0)$, the belief state $\bm{b}_t$ at any step $t \geq 0$ is guaranteed to be a one-hot vector. Therefore, it almost reduces to an \MM model with $A_o \defeq \dbP$ for all $o \in \caO$. The only difference is that $A_o$ is now a deterministic transition matrix instead of an orthogonal matrix, which is negligible to the approximation of the \MM model for both RNNs and Transformers (because $A_o$ keeps the $\ell_2$ norm of an one-hot vector). The state dimension of this \MM model is $n$, and the observation space size of it is $m$. Now we show how to approximate the output of an \MM model.

Recall the updating rule of \MM model (c.f. Section \ref{sec:env_matmul}): 
\begin{align}
s_{t+1} = A_{o_{t+1}}s_t.
\end{align}
$s_0$ is fixed and $A_o A_o^\top = I$ for any $o \in \caO$.

The RNN updating rule is 
\begin{align}
\label{eqn:rnn_recursion_appendix}
h_t=\mathrm{ReLU} \left(W_i x_t  +b_{i}+ W_h h_{t-1} +b_{h}\right).
\end{align}

Define the following sequence $\bar{h}_t \in \dbR^n$ such that
\begin{equation}
\label{eqn:update_rule_rnn_expressiveness_appendix}
\bar{h}_{t+1} = \sum_{o=1}^m \operatorname{ReLU}\left(A_o \bar{h}_{t} + \alpha \bm{1} \bm{e}_o^\top \bm{e}_{o_{t+1}} - \frac{\alpha}{2} \bm{1} \right) - \frac{\alpha}{2} \bm{1},
\end{equation}
where $\bar{h}_0 = s_0$ and $\alpha > 2 \max_{o \in \caO, 1 \leq t \leq T} \|A_o \bar{h}_t\|_\infty$. We will prove that $\bar{h}_t = s_t$ for all $0 \leq t \leq T$.

Leveraging a inductive argument, suppose it is known that $s_t = \bar{h}_t$. It holds that 
\begin{align}
\operatorname{ReLU}\left(A_o \bar{h}_{t} + \alpha \bm{1} \bm{e}_o^\top \bm{e}_{o_{t+1}} - \frac{\alpha}{2} \bm{1} \right) = \begin{cases}
A_o \bar{h}_{t} + \alpha \bm{1} /2 &\quad\text{if } o = o_{t+1}\\
\bm{0} &\quad\text{otherwise.} \\
\end{cases}
\end{align}
Then $\bar{h}_{t+1} = s_{t+1}$ due to $\bar{h}_t = s_t$.

Next we construct an RNN that implements (\ref{eqn:update_rule_rnn_expressiveness_appendix}) with $d = nm$. Let $h_t \in \dbR^d$ be the hidden state at step $t$, we write $h_t$ as 
\begin{align}
h_t = \left[h_{t,1}^\top, h_{t,2}^\top, ..., h_{t,m}^\top\right]^\top,
\end{align}
where $h_{t, i} \in \dbR^n$ for any $t, i$. Suppose for step $t \geq 1$ we have 
\begin{align}
\label{eqn:rnn_thm_proof_inductive_hypothesis}
h_{t, i} = \operatorname{ReLU}\left(A_i \bar{h}_{t - 1} + \alpha \bm{1} \bm{e}_i^\top \bm{e}_{o_t} - \frac{\alpha}{2} \bm{1} \right)
\end{align}
and 
as inductive hypothesis. Since $\bar{h}_t = \sum_{o=1}^m h_{t, o} - \alpha \bm{1} /2$, then it is straightforward to construct weight matrix $W_h \in \dbR^{d \times d}$ and bias $b_h \in \dbR^{d}$ so that 
\begin{align}
\label{eqn:rnn_thm_cons_h}
W_h h_t + b_h = \left[A_1 \bar{h}_t, A_2 \bar{h}_t, ...,  A_m \bar{h}_t\right].
\end{align}
Moreover, we choose $W_i \in \dbR^{d \times n}$ and $b_i \in \dbR^d$ such that
\begin{align}
\label{eqn:rnn_thm_cons_i}
W_i x_{t+1} + b_i = \alpha \left[\bm{1} \bm{e}_1^\top \bm{e}_{o_{t+1}}, \bm{1} \bm{e}_2^\top \bm{e}_{o_{t+1}} - \frac{\alpha}{2}, ..., \bm{1} \bm{e}_m^\top \bm{e}_{o_{t+1}}  \right] - \frac{\alpha}{2} \bm{1}^d
\end{align}
since $x_{t+1} = \bm{e}_{o_{t+1}}$.
Therefore, it holds that 
\begin{align}
h_{t+1,i} = \operatorname{ReLU}\left(A_i \bar{h}_t + \alpha \bm{1} \bm{e}_i^\top \bm{e}_{o_{t+1}} - \frac{\alpha}{2} \bm{1}\right),
\end{align}
which indicates that (\ref{eqn:rnn_thm_proof_inductive_hypothesis}) holds for any $t \geq 1$ as long as it holds for $t = 1$.

When $t = 1$, we shall construct $h_0$ so that (\ref{eqn:rnn_thm_proof_inductive_hypothesis}) holds for $t=1$ with the constructed parameters by (\ref{eqn:rnn_thm_cons_h}) and (\ref{eqn:rnn_thm_cons_i}), which is true as long as $\bar{h}_0 = \sum_{o=1}^m h_{0, o} - \alpha \bm{1} /2$. Therefore, we can simply choose $h_0$ as 
\begin{align}
h_0 = \left[s_0 - \frac{\alpha}{2} \bm{1}^n, \bm{0}^n, ..., \bm{0}^n\right].
\end{align}
The only remaining problem now is to determine the value of $\alpha$. Since $\|A_o \bar{h}_t\|_\infty \leq \|A_o \bar{h}_t\|_2 = \|A_o s_t\|_2 = \|s_t\|_2 = \|s_0\|_2$, it suffices to choose $\alpha = 4\|s_0\|_2$.

\subsection{Proof of Theorem \ref{thm:tf_expressiveness}}
\label{appendix:full_proof_tf_expressiveness}
It suffices to consider an \MM model with state dimension $n$ and observation space size $m$ according to the first paragraph of \ref{appendix:proof_rnn_expressiveness}.

Denote the constructed $L$-layer (decoder-only) Transformer as $\caT$, which has embedding dim $d = 2n^2+6$, (2-layer) MLP width $4n^3 + 2n^2$, and $H=2$ attention heads. Let us recall the forwarding architecture of $\caT$. Given the (one-hot) input sequence $(o_1, o_2, ..., o_T)^\top \in \dbR^{T \times m}$, we first transforms it into an augmented input sequence $\bm{X}_{\operatorname{input}} \in \dbR^{(T+1) \times (m + 4)}$ before feeding it into $\caT$. This is achieved by adding an extra token to the input from the beginning modifying the input sequence to be $(o_0, o_1, ..., o_T)^\top \in \dbR^{(T+1) \times (m+1)}$, where $o_0 = \bm{e}_{m+1}$ is a specially token marking the beginning of the sequence. Motivated by \cite{liu2022transformers}, we concatenate a 3-dimensional sinusoidal positional encoding at each position to form the $\bm{X}_{\operatorname{input}}$ as
\begin{align}
\label{eqn:tf_thm_proof_xinput}
\bm{X}_{\operatorname{input}} = 
\begin{bmatrix}
o_0^\top, \sin\left(\frac{\pi \cdot 0}{4T}\right), \cos\left(\frac{\pi \cdot 0}{4T}\right), 1 \\
\vdots \\
o_i^\top, \sin\left(\frac{\pi i}{4T}\right), \cos\left(\frac{\pi i}{4T}\right), 1 \\
\vdots \\
o_T^\top, \sin\left(\frac{\pi}{4}\right), \cos\left(\frac{\pi}{4}\right), 1
\end{bmatrix}.
\end{align}
In the rest of the proof, we will give a brief summary of construction at first, and leave the technical details as the last part of the proof.

\paragraph{Brief construction.} Feeding $\bm{X}_{\operatorname{input}}$ into the $\caT$, it will be transformed into an embedding sequence $\bm{X}^{(0)} \in \dbR^{(T+1) \times d}$ by a linear encoding layer, where $d$ is the embedding dimension of $\caT$. Choosing $d = 2n^2 + 6$, the encoding layer is constructed so that 
\begin{align}
\label{eqn:tf_thm_proof_x0}
\bm{X}^{(0)} = 
\begin{bmatrix}
\vdots \\
\bm{0}^{n^2}, 0,0,0, \operatorname{vec}\left(\Lambda_{i, 0}\right),\sin\left(\frac{\pi i}{4T}\right), \cos\left(\frac{\pi i}{4T}\right), 1 \\
\vdots \\
\end{bmatrix},
\end{align}
where $\Lambda_{i, 0} \defeq A_{o_i}$ for $0 \leq i \leq T$ and $A_{o_0} = \bm{I}_n$. Assume all the operations have \textbf{no approximation error}, we show that the output of the $l$-th ($l \geq 1$) attention block of $\caT$ is 
\begin{align}
\label{eqn:tf_thm_proof_xl}
\bm{X}^{(l)} = 
\begin{bmatrix}
\vdots \\
\bm{0}^{n^2}, 0,0,0, \operatorname{vec}\left(\Lambda_{i, l}\right),\sin\left(\frac{\pi i}{4T}\right), \cos\left(\frac{\pi i}{4T}\right), 1 \\
\vdots \\
\end{bmatrix},
\end{align}
where $\Lambda_{i, l} \defeq A_{\max(i - 2^l + 1, 0):i}$ for $l \geq 0$ and $\Lambda_{i,l} \defeq \bm{I}$ for $l < 0$. Recall that $A_{i:j} = \prod_{k=i}^j A_{o_k}$.

Suppose this is true for $\bm{X}^{(l-1)}$, we now prove this induction for layer $l$ assuming no approximation error. Recall the $l$-th self-attention layer processes as 
\begin{align}
\operatorname{Attn}^{(l)}(\bm{X}^{(l-1)})= \operatorname{Concat}\left(\left\{ \operatorname{softmax}\left(\bm{X}^{(l-1)} \boldsymbol{W}_Q^{(l, h)}\left(\bm{X}^{(l-1)} \boldsymbol{W}_K^{(l, h)}\right)^{\top}+\boldsymbol{M}\right) \bm{X}^{(l-1)}\boldsymbol{W}_V^{(l, h)} \boldsymbol{W}_O^{(l, h)}\right\}_{h=1}^H\right),
\end{align}
where $\bm{W}_Q^{(l, h)}, \bm{W}_K^{(l, h)}, \boldsymbol{W}_V^{(l, h)} \boldsymbol{W}_O^{(l, h)}$ are query, key, value, output matrices of the $h$-th head. 

For the first attention head $h=1$, we construct matrix $\bm{W}_Q^{(l, 1)}$ so that 
\begin{align}
\label{eqn:tf_thm_proof_wq}
\bm{X}^{(l-1)} \bm{W}_Q^{(l, 1)} = 
\gamma \cdot \begin{bmatrix}
\vdots \\
\sin\left(\frac{\pi (i - 2^{l-1})}{4T}\right), \cos\left(\frac{\pi (i - 2^{l-1})}{4T}\right) \\
\vdots \\
\end{bmatrix} \in \dbR^{(T+1) \times 2}.
\end{align}
The $\bm{W}_K^{(l, 1)}$ matrix is constructed to form 
\begin{align}
\label{eqn:tf_thm_proof_wk}
\bm{X}^{(l-1)} \bm{W}_K^{(l, 1)} = 
\gamma \cdot \begin{bmatrix}
\vdots \\
\sin\left(\frac{\pi i}{4T}\right), \cos\left(\frac{\pi i}{4T}\right) \\
\vdots \\
\end{bmatrix} \in \dbR^{(T+1) \times 2}.
\end{align}
In this way, we can choose an appropriate value of $\gamma$ to ensure the attention mask matrix of the first attention head is approximately 
\begin{align}
\operatorname{softmax}\left(\bm{X}^{(l-1)} \boldsymbol{W}_Q^{(l, 1)}\left(\bm{X}^{(l-1)} \boldsymbol{W}_K^{(l, 1)}\right)^{\top}+\boldsymbol{M}\right) = 
\begin{bmatrix}
\vdots \\
\lambda_{i,l}^\top \\
\vdots \\
\end{bmatrix} \in \dbR^{(T+1) \times (T+1)},
\end{align}
where $\lambda_{i,l} \defeq \bm{e}_{\max(i - 2^{l-1}, 0)} \in \dbR^{(T+1)}$. 
The value of $\gamma$ is chosen as 
\begin{align}
\label{eqn:tf_thm_proof_gamma_eta}
\gamma = \frac{4\sqrt{2}T \log (2T / \eta) }{\pi}, \text{ where } \eta = \frac{1}{(8n)^{L+1} \cdot T}
\end{align}
As a result, the output of the first attention head $\bm{H}^{(l, 1)} \in \dbR^{(T+1) \times (d / 2)}$ will be approximately 
\begin{align}
\label{eqn:tf_thm_proof_first_attention_head}
\bm{H}^{(l, 1)} = 
\begin{bmatrix}
\vdots \\
\operatorname{vec}\left(\Lambda_{\max(i-2^{l-1}, 0),l-1}\right),0,0,0 \\
\vdots \\
\end{bmatrix}
\end{align}
by constructing appropriate value and output matrices.

For the second head $h = 2$, we simply produce a all-zero matrix $\bm{H}^{(l,2)} = \bm{0}^{(T+1) \times (d/2)}$. The final output of the attention layer $\bm{Y}^{(l-1)} \in \dbR^{(T+1)\times d}$ is produced by concatenating the output of two attention heads plus a residual connection: 
\begin{align}
\bm{Y}^{(l-1)} & = \bm{X}^{(l-1)} + \left[\bm{H}^{(l,1)}, \bm{H}^{(l, 2)}\right] \\
& = 
\begin{bmatrix}
\vdots \\
\operatorname{vec}\left(\Lambda_{\max(i-2^{l-1}, 0),l-1}\right), 0,0,0, \operatorname{vec}\left(\Lambda_{i,l-1}\right),\sin\left(\frac{\pi i}{4T}\right), \cos\left(\frac{\pi i}{4T}\right), 1 \\
\vdots \\
\end{bmatrix}.
\end{align}
The construction of the 2-layer MLP at the end of the $l$-th attention block will be divided into two parts. First of all, we use $4n^3$ hidden neurons to compute $(\Lambda_{\max(i-2^{l-1}, 0),l-1} - \bm{I}_n)\Lambda_{i,l-1} = \Lambda_{i,l} - \Lambda_{i,l-1}$ according to Lemma \ref{lem:mlp_matrix_multiplication}. Then a simple 2-layer ReLU network with $2n^2$ hidden neurons can flip the sign of the first $n^2$ entries of the input $\bm{Y}^{(l-1)}$, which we can use a GeLU network with the same size to simulate according to Lemma \ref{lem:gelu_relu_equivalent}. The output of the MLP is
\begin{align}
\operatorname{FFN}^{(l)}(\bm{Y}^{(l-1)}) = 
\begin{bmatrix}
\vdots \\
-\operatorname{vec}\left(\Lambda_{\max(i-2^{l-1}, 0),l-1}\right), 0,0,0, \operatorname{vec}\left(\Lambda_{i,l} - \Lambda_{i,l-1}\right),0,0,0 \\
\vdots \\
\end{bmatrix} \in \dbR^{(T+1) \times d}.
\end{align}
Finally, the output of the $l$-th attention block is 
\begin{align}
\bm{Y}^{(l-1)} + \operatorname{FFN}^{(l)}(\bm{Y}^{(l-1)}) = 
\begin{bmatrix}
\vdots \\
\bm{0}^{n^2}, 0,0,0, \operatorname{vec}\left(\Lambda_{i,l}\right),\sin\left(\frac{\pi i}{4T}\right), \cos\left(\frac{\pi i}{4T}\right), 1 \\
\vdots \\
\end{bmatrix} = \bm{X}^{(l)},
\end{align}
which proves the induction of (\ref{eqn:tf_thm_proof_xl}).

Without approximation error, the output of the last attention block $\bm{X}^{(L)}$ is 
\begin{align}
\bm{X}^{(L)} =    
\begin{bmatrix}
\vdots \\
\bm{0}^{n^2}, 0,0,0, \operatorname{vec}\left(\Lambda_{i,L}\right),\sin\left(\frac{\pi i}{4T}\right), \cos\left(\frac{\pi i}{4T}\right), 1 \\
\vdots \\
\end{bmatrix},
\end{align}
where $\Lambda_{i,L} = A_{0:i}$ since $L = \ceil{\log_2 T}$. A final linear decoder layer transformers $\bm{X}^{(L)}$ to the desired output sequence 
\begin{align}
\left(\bm{b}_1, \bm{b}_1, ..., \bm{b}_T\right) = \left(A_{0:1}\bm{b}_0, A_{0:2}\bm{b}_0, ..., A_{0:T}\bm{b}_0\right).
\end{align}

\paragraph{The self-attention layer.} We still assume no approximation error in the sequel. According to the expression of $\bm{X}^{(l-1)} \in \dbR^{(T+1) \times d}$ in (\ref{eqn:tf_thm_proof_xl}), the query matrix $\bm{W}_Q^{(l,1)} \in \dbR^{d \times 2}$ is constructed as
\begin{align}
\label{eqn:tf_thm_proof_wq}
\bm{W}_Q^{(l,1)} = \gamma \cdot 
\begin{bmatrix}
0 & 0 \\
\vdots & \vdots \\
\cos\left(\frac{\pi 2^{l-1}}{4T}\right) & \sin\left(\frac{\pi 2^{l-1}}{4T}\right) \\
-\sin\left(\frac{\pi 2^{l-1}}{4T}\right) & \cos\left(\frac{\pi 2^{l-1}}{4T}\right) \\
0 & 0 \\
\end{bmatrix}
\end{align}
according to the sine/cosine difference formula. The construction of $\bm{W}_K^{(l,1)}$ is straightforward.

As a result, the attention mask matrix is 
\begin{align}
\label{eqn:tf_thm_proof_attn_mask_mat}
\bm{M}^{(l)}_{ij} \defeq \left[\bm{X}^{(l-1)} \bm{W}_Q^{(l, 1)}\left(\bm{X}^{(l-1)} \boldsymbol{W}_K^{(l, h)}\right)^{\top}\right]_{ij} = 
\gamma^2 \cos\left(\frac{\pi (j - i + 2^{l-1})}{4T}\right)
\end{align}
for $j \leq i$. The entry for $j > i$ is obviously $-\infty$ since $\bm{M}$ is the causal mask. The $i$-th row of this matrix is 
\begin{align}
\label{eqn:tf_thm_proof_softmax_error}
\gamma^2 \left[\cos\left(\frac{\pi (-i + 2^{l-1})}{4T}\right), \cos\left(\frac{\pi (1 - i + 2^{l-1})}{4T}\right), ..., \cos\left(\frac{\pi 2^{l-1}}{4T}\right), -\infty, ...\right].
\end{align}
Since $0 \leq i, 2^{l-1} \leq T$, the maximum value is take at position $i - 2^{l-1}$ if $i - 2^{l-1} \geq 0$, and position 0 otherwise. Choosing 
\begin{align}
\gamma = \frac{4\sqrt{2}T \log (2T / \eta) }{\pi},
\end{align}
the softmax of this row would be $\lambda_{i, l} = \bm{e}_{\max(i - 2^{l-1}, 0)}$ with $\ell_1$ error $\eta$ according to Lemma \ref{lem:sin_pe} and Lemma \ref{lem:softmax}.

Finally, it suffices to use matrix $\bm{W}_V^{(l, 1)} = \bm{I}$ and $\bm{W}_O^{(l, 1)} \in \dbR^{d \times (d/2)}$ by
\begin{align}
\bm{X}^{(l-1)} \bm{W}_V^{(l, 1)} \bm{W}_Q^{(l, 1)} = 
\begin{bmatrix}
\vdots \\
\operatorname{vec}\left(\Lambda_{i,l-1}\right), 0,0,0 \\
\vdots \\
\end{bmatrix}
\end{align}
to produce the output $\bm{H}^{(l,1)}$ of the first attention head.

\paragraph{The approximation error.} The approximation error of $\caT$ needs to be bounded carefully in order to prove the $O(\poly(1/T))$ total $\ell_\infty$ error due to the exponential propagation over layers. We assume the number of states $n$ is a constant in the proof. 

There are three places in the construction introducing approximation error: 
\begin{itemize}
    \item The softmax operation to the attention mask matrix (\ref{eqn:tf_thm_proof_softmax_error}). Let 
    \begin{align}
    \operatorname{softmax}\left(\widehat{\bm{M}}^{(l)}\right) \defeq \begin{bmatrix}
\vdots \\
\widehat{\lambda}_{i,l}^\top \\
\vdots \\
\end{bmatrix} \in \dbR^{(T+1) \times (T+1)}
    \end{align}
    be the exact output of the softmax attention mask matrix in $\caT$, then $\|\widehat{\lambda}_{i,l} - \lambda_{i,l}\|_1 \leq \eta$ for any $i, l$
    according to Lemma \ref{lem:sin_pe}, \ref{lem:softmax} and the choice of $\gamma$ (c.f. (\ref{eqn:tf_thm_proof_gamma_eta})).
    \item The matrix multiplication operation performed by the MLP, which is the place error may propagate over layers. Let $\widehat{\bm{X}}^{(l)}$ be the exact output of $\caT$ after $l$-th attention block, then
    \begin{align}
\label{eqn:tf_thm_proof_xl_error}
\widehat{\bm{X}}^{(l)} = 
\begin{bmatrix}
\vdots \\
\hat{\bm{\varepsilon}}_{i,l}, 0,0,0, \operatorname{vec}\left(\widehat{\Lambda}_{i, l}\right),\sin\left(\frac{\pi i}{4T}\right), \cos\left(\frac{\pi i}{4T}\right), 1 \\
\vdots \\
\end{bmatrix}.
\end{align}
    Here the matrix $\widehat{\Lambda}_{i, l}$ may not equal to $\Lambda_{i,l}$ due to the approximation error of the MLP.
    \item The last occurrence of the approximation error is $\hat{\bm{\varepsilon}}_{i,l} \in \dbR^{n^2}$. This term is supposed to be $\bm{0}$ in our construction, which would be the case if our MLP uses ReLU as activation. According to Lemma \ref{lem:gelu_relu_equivalent}, it holds that $\|\hat{\bm{\varepsilon}}_{i,l}\|_\infty \leq \eta$ for any $i, l$ as long as we use our GeLU MLP to simulate the ReLU MLP with parameters bounded by $O(\poly(1/\eta)) = O(\poly(T))$ according to (\ref{eqn:tf_thm_proof_gamma_eta}).
\end{itemize}

Now we analyze the error propagation in a single attention block. Suppose at the beginning of the $l$-th attention block we have $\|\operatorname{vec}(\Lambda_{i,l-1} - \widehat{\Lambda}_{i,l-1})\|_\infty \leq \epsilon_{l-1}$ for all $0 \leq i \leq T$, so $\epsilon_0 = 0$.

The first place introducing the error is the output of the first attention head $\bm{H}^{(l,1)}$ in (\ref{eqn:tf_thm_proof_first_attention_head}). The exact output $\widehat{\bm{H}}^{(l,1)}$ of the Transformer $\caT$ is 
\begin{align}
\widehat{\bm{H}}^{(l,1)} = 
    \operatorname{softmax}\left(\widehat{\bm{M}}^{(l)}\right) \widehat{\bm{X}}^{(l-1)} \bm{W}_V^{(l, 1)} =     \begin{bmatrix}
\vdots \\
\widehat{\lambda}_{i,l}^\top \\
\vdots \\
\end{bmatrix} \cdot     
\begin{bmatrix}
\vdots \\
\operatorname{vec}\left(\widehat{\Lambda}_{i,l-1}\right), 0,0,0 \\
\vdots \\
\end{bmatrix}.
\end{align}
The approximation can then be decomposed as 
\begin{align}
\widehat{\bm{H}}^{(l,1)} - \bm{H}^{(l,1)} & = \operatorname{softmax}\left(\widehat{\bm{M}}^{(l)}\right) \left(\widehat{\bm{X}}^{(l-1)} \bm{W}_V^{(l, 1)} - \bm{X}^{(l-1)} \bm{W}_V^{(l, 1)}\right) \\
& \quad + \left(\operatorname{softmax}\left(\widehat{\bm{M}}^{(l)}\right) -\operatorname{softmax}\left(\bm{M}^{(l)}\right)\right) \bm{X}^{(l-1)} \bm{W}_V^{(l, 1)}.
\end{align}
Elementary inequalities show that 
\begin{align}
\left\|\operatorname{vec}\left(\widehat{\bm{H}}^{(l,1)} - \bm{H}^{(l,1)}\right)\right\|_\infty & \leq \max_{0 \leq i \leq T} \left( \left\|\widehat{\lambda}_{i,l}\right\|_1 \cdot \max_{0 \leq j \leq T} \left\|\operatorname{vec}\left(\widehat{\Lambda}_{j,l-1} - \Lambda_{j,l-1}\right)\right\|_\infty \right) \\
& \quad + \max_{0 \leq i \leq T} \left( \left\|\widehat{\lambda}_{i,l} -\lambda_{i,l}\right\|_1 \cdot \max_{0 \leq j \leq T} \left\|\operatorname{vec}\left(\Lambda_{j,l-1}\right)\right\|_\infty \right) \\
\label{eqn:tf_thm_proof_approx_error_H}
& \leq (1 + \eta) \epsilon_{l-1} + \eta.
\end{align}
The second inequality follows from the definition of $\epsilon_{l-1}$, $\|\widehat{\lambda}_{i,l} - \lambda_{i,l}\|_1 \leq \eta$, and the fact that $\Lambda_{j,l-1}$ is an orthogonal matrix. 

Write $\widehat{\bm{H}}^{(l,1)}$ as 
\begin{align}
\widehat{\bm{H}}^{(l,1)} =  
\begin{bmatrix}
\vdots \\
\widehat{\bm{h}}_{i, l} \in \dbR^{n^2}, 0, 0, 0 \\
\vdots \\
\end{bmatrix}.
\end{align}
It is shown by (\ref{eqn:tf_thm_proof_approx_error_H}) that $\|\operatorname{vec}(\Lambda_{\max(i-2^{l-1}, 0),l-1}) - \hat{\bm{h}}_{i, l}\|_\infty \leq (1 + \eta) \epsilon_{l-1} + \eta$. 

The input to the MLP in $\caT$ is 
\begin{align}
\widehat{\bm{Y}}^{(l-1)} = \widehat{\bm{X}}^{(l-1)} +  \left[\widehat{\bm{H}}^{(l,1)}, \bm{H}^{(l, 2)}\right] = 
\begin{bmatrix}
\vdots \\
\widehat{\bm{\varepsilon}}_{i,l-1} + \widehat{\bm{h}}_{i, l}, 0, 0, 0, \operatorname{vec}\left(\widehat{\Lambda}_{i, l-1}\right),\sin\left(\frac{\pi i}{4T}\right), \cos\left(\frac{\pi i}{4T}\right), 1 \\
\vdots \\
\end{bmatrix}.
\end{align}
The MLP at layer $l$ implements an approximate matrix multiplication on the matrix reshaped by $\widehat{\bm{\varepsilon}}_{i,l-1} + \widehat{\bm{h}}_{i, l} - \bm{I}_n$ and $\widehat{\Lambda}_{i, l-1}$. Now that 
\begin{align}
\left\|\operatorname{vec}\left(\Lambda_{\max(i-2^{l-1}, 0),l-1}\right) - \hat{\bm{h}}_{i, l} - \widehat{\bm{\varepsilon}}_{i,l-1}\right\|_\infty \leq (1 + \eta) \epsilon_{l-1} + 2\eta, \left\|\operatorname{vec}\left(\Lambda_{i,l-1} - \widehat{\Lambda}_{i,l-1}\right)\right\|_\infty \leq \epsilon_{l-1}
\end{align}
we have 
\begin{align}
\left\|\operatorname{vec}\left(\Lambda_{i,l} - \Lambda_{i,l-1}\right) - \operatorname{vec}\left(\hat{\Lambda}_{i,l} - \hat{\Lambda}_{i,l-1}\right)\right\|_\infty & \leq \left((1 + \eta) \epsilon_{l-1} + 2\eta\right)\left(n\epsilon_{l-1} + \sqrt{n}\right) + \sqrt{n} \epsilon_{l-1} \\
& \leq (1 + \eta) n \epsilon_{l-1}^2 + \left((1 + \eta) \sqrt{n} + 2\eta n + \sqrt{n}\right) \epsilon_{l-1} + 2\eta \sqrt{n},
\end{align}
which implies
\begin{align}
\left\|\operatorname{vec}\left(\Lambda_{i,l} -\hat{\Lambda}_{i,l}\right)\right\|_\infty \leq (1 + \eta) n \epsilon_{l-1}^2 + \left((1 + \eta) \sqrt{n} + 2\eta n + \sqrt{n} + 1\right) \epsilon_{l-1} + 2\eta \sqrt{n}.
\end{align}
Define $\epsilon_{l} \defeq \max_i \|\operatorname{vec}(\Lambda_{i,l} -\hat{\Lambda}_{i,l})\|_\infty$, then the sequence $\epsilon_l$ satisfies
\begin{align}
\epsilon_{l} \leq (1 + \eta) n \epsilon_{l-1}^2 + \left((1 + \eta) \sqrt{n} + 2\eta n + \sqrt{n} + 1\right) \epsilon_{l-1} + 2\eta \sqrt{n}.
\end{align}
Thanks to the construction of $\eta$ in (\ref{eqn:tf_thm_proof_gamma_eta}), we prove by induction that $\epsilon_{l} \leq (8n)^{l - L} \cdot T^{-1}$, which is obvious when $l = 0$. Suppose this is true for $l-1$, then 
\begin{align}
\label{eqn:tf_err_induction_start}
\epsilon_{l} & \leq (1 + \eta) n \epsilon_{l-1}^2 + \left((1 + \eta) \sqrt{n} + 2\eta n + \sqrt{n} + 1\right) \epsilon_{l-1} + 2\eta \sqrt{n} & \\
& \leq (1 + \eta) n \epsilon_{l-1} + \left((1 + \eta) \sqrt{n} + 2\eta n + \sqrt{n} + 1\right) \epsilon_{l-1} + 2\eta \sqrt{n} & (\text{from } \epsilon_{l-1} < 1) \\
& \leq 4n(1 + \eta) \epsilon_{l-1} + 2\eta n & (\text{from } n \geq 1) \\
& \leq 6n \epsilon_{l-1} + 2 \eta n \\
& \leq \frac{1}{(8n)^{L-l} \cdot T} \left(\frac{3}{4} + 2\eta nT \cdot (8n)^{L-l} \right) & (\text{from the induction hypothesis}) \\
& \leq \frac{1}{(8n)^{L-l} \cdot T} \left(\frac{3}{4} +  \frac{1}{4}\right) & (\text{from the definition of } \eta) \\
\label{eqn:tf_err_induction_end}
& \leq \frac{1}{(8n)^{L-l} \cdot T}.
\end{align}
This inductive argument shows that $\epsilon_L = 1/T = O(\poly(1/T))$. Moreover, the parameters of $\caT$ are all bounded by $O(\poly(T, n))$ from the choice of $\eta$ and $\gamma$.

\subsection{Proof of Theorem \ref{thm:tf_expressiveness_stochastic_hmm}}
\label{appendix:proof_tf_expressiveness_stochastic_hmm}

The Transformers constructed here is slightly different from the ones constructions in Theorem \ref{thm:tf_expressiveness} in that we choose a different $\eta$ (recall that the Transformers now has $T$-precision):
$$\eta = \frac{1}{(8n)^{L+1} \cdot \exp(4T)}.$$
Recall that $\tilde{\bm{b}}_t = A_{1:t} \bm{b}_0$, then from the same analysis as (\ref{eqn:tf_err_induction_start}) to (\ref{eqn:tf_err_induction_end}) we know that the output of the last attention block can be regrouped to vectors $\hat{\bm{b}}_t$ such that $\|\hat{\bm{b}}_t - \tilde{\bm{b}}_t\|_\infty = O(\exp(-4T))$. It then remains to normalize the vector $\hat{\bm{b}}_t$ to have 1 $\ell_1$ norm.

Next we consider how to normalize the vector $\hat{\bm{b}}_t$ by a $O(\log T)$-layer MLP with width $O(n)$. The MLP is divided into two parts to achieve the following goals: 
\begin{enumerate}
    \item Find a multiple $c_0$ such that $1/2 \leq \|c_0 \hat{\bm{b}}_t\|_1 \leq 1$;
    \item Divide the vector $c_0 \hat{\bm{b}}_t$ by its $\ell_1$ norm.
\end{enumerate}

\paragraph{Finding the multiple $c_0$.} Let $C_l \defeq 1 / c_l > 1$. On the $n$-dimensional input $\hat{\bm{b}}_t$, we calculate $\hat{\bm{b}}_{t, 0} = C_l^T \hat{\bm{b}}_t, v_0 \defeq \|\hat{\bm{b}}_{t, 0}\|_1$ using the first two layers. It holds that $1 - O(\exp(-3T)) \leq v_0 \leq C_l^T + O(\exp(-3T))$ since $\|\hat{\bm{b}}_t - \tilde{\bm{b}}_t\|_\infty = O(\exp(-4T))$ and $c_l^T \leq \|\tilde{\bm{b}}_t\|_1 \leq 1$. We use 

For the next $4k+1$-st to $4k+4$-th layers ($k = 0, 1, ..., \lceil \log_2 T \rceil$), we process the vector $\hat{\bm{b}}_{t, k}$ as follows:

\begin{itemize}
    \item Use layer $4k+1$ and $4k+2$ to compute $p(v_k - C_l^{\lfloor T/2^{k+1} \rfloor})$ where
    \begin{equation}
    p(x) = \mathrm{ReLU}\left(1 - \mathrm{ReLU}(-x)\right) = 
    \begin{cases}
    1 & x \geq 1 \\
    x & 0 \leq x < 1 \\
    0 & 0 < x
    \end{cases}
    \end{equation}.
    \item Use layer $4k+3$ and $4k+4$ to approximate
    \begin{align}
    v_{k+1} & \defeq p\left(v_k - C_l^{\lfloor T/2^{k+1} \rfloor}\right) c_l^{\lfloor T/2^{k+1} \rfloor} v_k + \left(1 - p(v_k - C_l^{\lfloor T/2^{k+1} \rfloor})\right) v_k \\
    \hat{\bm{b}}_{t, k+1} & \defeq p\left(v_k - C_l^{\lfloor T/2^{k+1} \rfloor}\right) c_l^{\lfloor T/2^{k+1} \rfloor} \hat{\bm{b}}_{t, k}  + \left(1 - p(v_k - C_l^{\lfloor T/2^{k+1} \rfloor})\right) \hat{\bm{b}}_{t, k}
    \end{align}
    by Lemma \ref{lem:mlp_matrix_multiplication} to perform the multiplication, so that $v_{k+1} = \|\hat{\bm{b}}_{t, k+1}\|_1$. We choose the parameters of these two layers to guarantee that the approximation error is $O(\exp(-3T))$.
    Let the output of layer $4k+4$ for $v_k$ be $\hat{v}_k$.
\end{itemize}

Now we prove by induction that $1/2 \leq v_k \leq C_l^{\lfloor T/2^k \rfloor} + 1$ for all $k$, where the case for $k=0$ is trivial. If $C_l^{\lfloor T/2^k \rfloor} + 2 \geq v_k \geq C_l^{\lfloor T/2^{k+1} \rfloor} + 1$, then $1 \leq v_{k+1} = v_k c_l^{\lfloor T/2^{k+1} \rfloor} \leq C_l^{\lfloor T/2^{k+1} \rfloor} + 1$. The case for $v_k \leq C_l^{\lfloor T/2^{k+1} \rfloor}$ is similar.

If $C_l^{\lfloor T/2^{k+1} \rfloor} < v_k < C_l^{\lfloor T/2^{k+1} \rfloor} + 1$, then we can rewrite $v_{k+1}$ using the difference $p_k \defeq p(v_k - C_l^{\lfloor T/2^{k+1} \rfloor}) = v_k - C_l^{\lfloor T/2^{k+1} \rfloor}$:
\begin{align}
v_{k+1} = (1 - p_k) C_l^{\lfloor T/2^{k+1} \rfloor} + p_k^2 c_l^{\lfloor T/2^{k+1} \rfloor} + p_k (2 - p_k), 0 < p_k < 1.
\end{align}
It is nor hard to show that it still holds that $1/2 \leq v_{k+1} \leq C_l^{\lfloor T/2^{k+1} \rfloor} + 1$. 

Next we analyze the approximation error of multiplication. For $k=0$ it holds that $|\hat{v}_k - v_k| = 0$. Note that 
$$ \hat{v}_{k+1} \defeq p(\hat{v}_k - C_l^{\lfloor T/2^{k+1} \rfloor}) c_l^{\lfloor T/2^{k+1} \rfloor} \hat{v}_k + (1 - p(\hat{v}_k - C_l^{\lfloor T/2^{k+1} \rfloor})) \hat{v}_k.
 $$
Since the function $p$ is 1-Lipschitz and $v_k = O(\exp(T/2^k))$, it holds that $|\hat{v}_{k+1} - v_{k+1}| \leq O(\exp(T/2^k)) |\hat{v}_{k} - v_{k}| + O(\exp(-3T))$. The $\ell_\infty$ approximation error of $\hat{\bm{b}}_{t, k}$ is the same as that of $v_k$.

Let $\bar{\bm{b}}_t$ be the output of the $4\lceil \log_2 T \rceil + 4$ layer, then it holds that $\|\bar{\bm{b}}_t / \|\bar{\bm{b}}_t\|_1 - \hat{\bm{b}}_t / \|\hat{\bm{b}}_t\|_1\|_\infty \leq O(\exp(-2T))$ and $1/4 \leq \|\bar{\bm{b}}_t\|_1 < 3/2$.

\paragraph{Normalizing the vector $\bar{\bm{b}}_t$.} The output $\bar{\bm{b}}_t$ of the first part of the MLP is then fed into the second part of the MLP. Denote $0 \leq c_1 \defeq 1 - 2\|\bar{\bm{b}}_t\|_1 / 3 < 5/6$, then the final target is to compute 

$$ \frac{2\bar{\bm{b}}_t}{3(1 - c_1)} = \frac{2\bar{\bm{b}}_t}{3} \left(1 + c_1 + c_1^2 + \cdots \right). $$

If we use $O(k)$-layers in the second part of the MLP, then we can approximate the sum
$$ 1 + c_1 + c_1^2 + \cdots + c_1^{2^{k} - 1} $$
with error $O(\exp(-T))$ by picking appropriate parameters according to Lemma \ref{lem:mlp_matrix_multiplication}. 

Moreover, 
\begin{align}
\left|\frac{1}{1 - c_1} - 1 + c_1 + c_1^2 + \cdots + c_1^{2^{k} - 1}\right| = \frac{c_1^{2^k}}{1 - c_1} \leq 6 \left(\frac{5}{6}\right)^{2^k}.
\end{align}
Therefore, it suffices to choose $k = O(\log T)$ to approximate $2\bar{\bm{b}}_t/3(1 - c_1)$ in $\ell_\infty$ error $O(\exp(-T))$. Aggregating the approximation error of both parts of the MLP, the overall approximation error is $O(\exp(-T))$.

\end{document}